%% file: neurips_2026.tex
\documentclass{article}

\PassOptionsToPackage{numbers, compress}{natbib}
\usepackage[main,final]{neurips_2026}

\usepackage[utf8]{inputenc}
\usepackage[T1]{fontenc}
\usepackage{hyperref}
\usepackage{url}
\usepackage{booktabs}
\usepackage{amsfonts}
\usepackage{amsmath}
\usepackage{amssymb}
\usepackage{amsthm}
\usepackage{nicefrac}
\usepackage{microtype}
\usepackage[table]{xcolor}
\usepackage{graphicx}
\usepackage{subcaption}
\usepackage{natbib}
\usepackage{todonotes}
\usepackage{wrapfig}
\usepackage{tcolorbox}
\tcbuselibrary{skins,breakable}
\usepackage{enumitem}
\usepackage{bbm}
\usepackage{aliascnt}
\usepackage{float}
\usepackage{placeins}
\usepackage{fontawesome5}

\usepackage[nameinlink,noabbrev]{cleveref}

\usepackage{caption}
\theoremstyle{plain}
\newtheorem{theorem}{Theorem}[section]

\newaliascnt{proposition}{theorem}
\newtheorem{proposition}[proposition]{Proposition}
\aliascntresetthe{proposition}

\newaliascnt{lemma}{theorem}
\newtheorem{lemma}[lemma]{Lemma}
\aliascntresetthe{lemma}

\newaliascnt{corollary}{theorem}
\newtheorem{corollary}[corollary]{Corollary}
\aliascntresetthe{corollary}

\theoremstyle{definition}
\newaliascnt{definition}{theorem}

\aliascntresetthe{definition}

\newaliascnt{assumption}{theorem}

\aliascntresetthe{assumption}

\theoremstyle{remark}
\newaliascnt{remark}{theorem}

\aliascntresetthe{remark}

\crefname{theorem}{Theorem}{Theorems}
\Crefname{theorem}{Theorem}{Theorems}
\crefname{proposition}{Proposition}{Propositions}
\Crefname{proposition}{Proposition}{Propositions}
\crefname{lemma}{Lemma}{Lemmas}
\Crefname{lemma}{Lemma}{Lemmas}
\crefname{corollary}{Corollary}{Corollaries}
\Crefname{corollary}{Corollary}{Corollaries}
\crefname{definition}{Definition}{Definitions}
\Crefname{definition}{Definition}{Definitions}
\crefname{assumption}{Assumption}{Assumptions}
\Crefname{assumption}{Assumption}{Assumptions}
\crefname{remark}{Remark}{Remarks}
\Crefname{remark}{Remark}{Remarks}

\newcommand{\TV}{\mathrm{TV}}
\newcommand{\KL}{D_{\mathrm{KL}}}

\newcommand{\cC}{\mathcal{C}}

\newcommand{\pstar}{p^\star}
\newcommand{\kNN}{\mathrm{kNN}}

\usepackage{xspace}
\newcommand{\name}
{{\textsc{COMPASS}}\xspace}

\newtcolorbox[auto counter, number within=section]{compassbox}[1][]{
  enhanced,
  breakable,
  title={},
  #1
}

\title{%
    To Align or Not To Align: \\
    Check Your COMPASS Before You Train
}

\author{%
    Fabian Gr\"oger$^{1,2,3,}$\thanks{Joint first authorship}\; , Petar Damjanovi\'c$^{1,}$\footnotemark[1]\; , Maria Brbi\'c$^{1}$
    \\[0.2em]
    $^1$EPFL \; $^2$University of Basel \; $^3$HSLU
}

\begin{document}
 
\maketitle
\frenchspacing

\begin{abstract}
\input{Sections/Abstract}
\end{abstract}

\section{Introduction}
\label{sec:intro}
\input{Sections/Introduction}

\section{Related work}
\label{sec:related}
\input{Sections/RelatedWork}

\section{Preliminaries}
\label{sec:prelim}
\input{Sections/Preliminaries}

\section{Theoretical motivation}
\label{sec:theory}
\input{Sections/Theory_v2}

\section{Cheap-to-compute scores for downstream problems}
\label{sec:practical}
\input{Sections/Practical}

\section{Discussion}
\label{sec:discussion}
\input{Sections/Discussion}

\section*{Acknowledgements}
We thank Shuo Wen, Artyom Gadetsky, Maxim Kodryan, and Jack Naimer for their valuable suggestions, which helped improve the clarity of the manuscript.
We gratefully acknowledge the support of the Swiss National Science Foundation (SNSF) starting grant TMSGI2\_226252/1, SNSF grant IC00I0\_231922, SNSF grant 10.004.411, Swiss Cancer League (KFS-6340-02-2025-R), Kamprad Foundation (Cancer Immunotherapy Networking Program), and the Swiss AI Initiative Large Call (ab006). Maria Brbi\'c is a CIFAR Fellow in the Multiscale Human Program.

\medskip
\small
\bibliography{bibliography}

\newpage

\appendix
\crefalias{section}{appendix}
\crefalias{subsection}{appendix}
\crefalias{subsubsection}{appendix}

\crefname{appendix}{Appendix}{Appendices}
\Crefname{appendix}{Appendix}{Appendices}
\input{Sections/Appendix_v2}

\newpage
\input{checklist.tex}

\end{document}

%% file: Sections/Abstract.tex
\looseness=-1
Combining pretrained models is a common strategy for building AI systems, whether by aligning a vision encoder with a language model or merging fine-tuned checkpoints. However, finding compatible pairs typically requires costly empirical search, as compatibility is known only after alignment or merging. 
We propose \name (COMPatibility ASsessment Score), a theoretically motivated diagnostic that estimates whether a model pair will be compatible \textit{prior} to alignment or merging.
Building on the observation that representational convergence is most robust at the level of local neighborhoods, \name decomposes model compatibility into two cheap-to-compute axes: a pair-specific \emph{ease}, defined as the $k$-NN overlap between the two encoders, and a paradigm-level \emph{ceiling}, capturing the maximum overlap their training paradigms allow. 
Together, these two axes provide a ranking of candidate model pairs and prescribe how to act on each pair: \textit{(i)} prioritize pairs that already agree and whose paradigms are compatible, \textit{(ii)} invest additional compute on pairs with compatible paradigms but low initial agreement, \textit{(iii)} switch the pretraining paradigm when it cannot support sufficient agreement, or \textit{(iv)} deprioritize pairs that score poorly on both axes.
We apply \name to $72$ cross-modal alignment experiments and $392$ task-arithmetic model merges and show that compatibility estimates by COMPASS strongly correlate with downstream success, achieving average Pearson correlation of $0.91$ on cross-modal alignment tasks and  $0.70$ on model merging.  

%% file: Sections/Introduction.tex
Combining two pretrained models, whether by aligning a vision encoder with a language model~\citep{zhai2022lit,li2023blip2,alayrac2022flamingo,merullo2023linearly,koh2023grounding,groger2025limited} or by merging two fine-tuned checkpoints into a single model~\citep{wortsman2022model,matena2022fisher,yadav2024ties,akiba2024evolutionary}, is a central operation in modern transfer pipelines~\citep{zhai2022lit,li2023blip2,alayrac2022flamingo,wortsman2022model, ilharco2023editing}.
In both cases, the most critical decision is which pair of models to combine. We treat both operations as instances of \textit{model-pair compatibility}: the degree of agreement two pretrained models exhibit when aligned or merged. While prior works have proposed transferability metrics for selecting a single pretrained model for fine-tuning~\citep{achille2019task2vec,nguyen2020leep,you2021logme}, the analogous question for model pairs (\textit{i.e.}, whether two pretrained models will combine well) is typically answered only after expensive alignment or merging. A diagnostic that ranks candidate pairs from a small probe set, scoring each model once and reusing these scores across candidate pairs, could avoid most of these runs, reducing evaluation cost from scaling with the number of pairs to scaling with the number of models. 

A growing body of empirical work suggests that such a diagnostic could be achievable. 
The Platonic Representation Hypothesis \citep{huh2024platonic} suggested that sufficiently expressive models converge toward a shared global representation, whereas the Aristotelian Representation Hypothesis \citep{groeger2026aristotelian} showed that the agreement is mainly local: models trained with different objectives and data agree on \emph{who is close to whom}, even when their global geometries differ, which makes it a natural candidate for a diagnostic. However, there is still a gap between this principle and a working model compatibility diagnostic.


\begin{figure}[t]
    \centering
    \includegraphics[width=\linewidth]{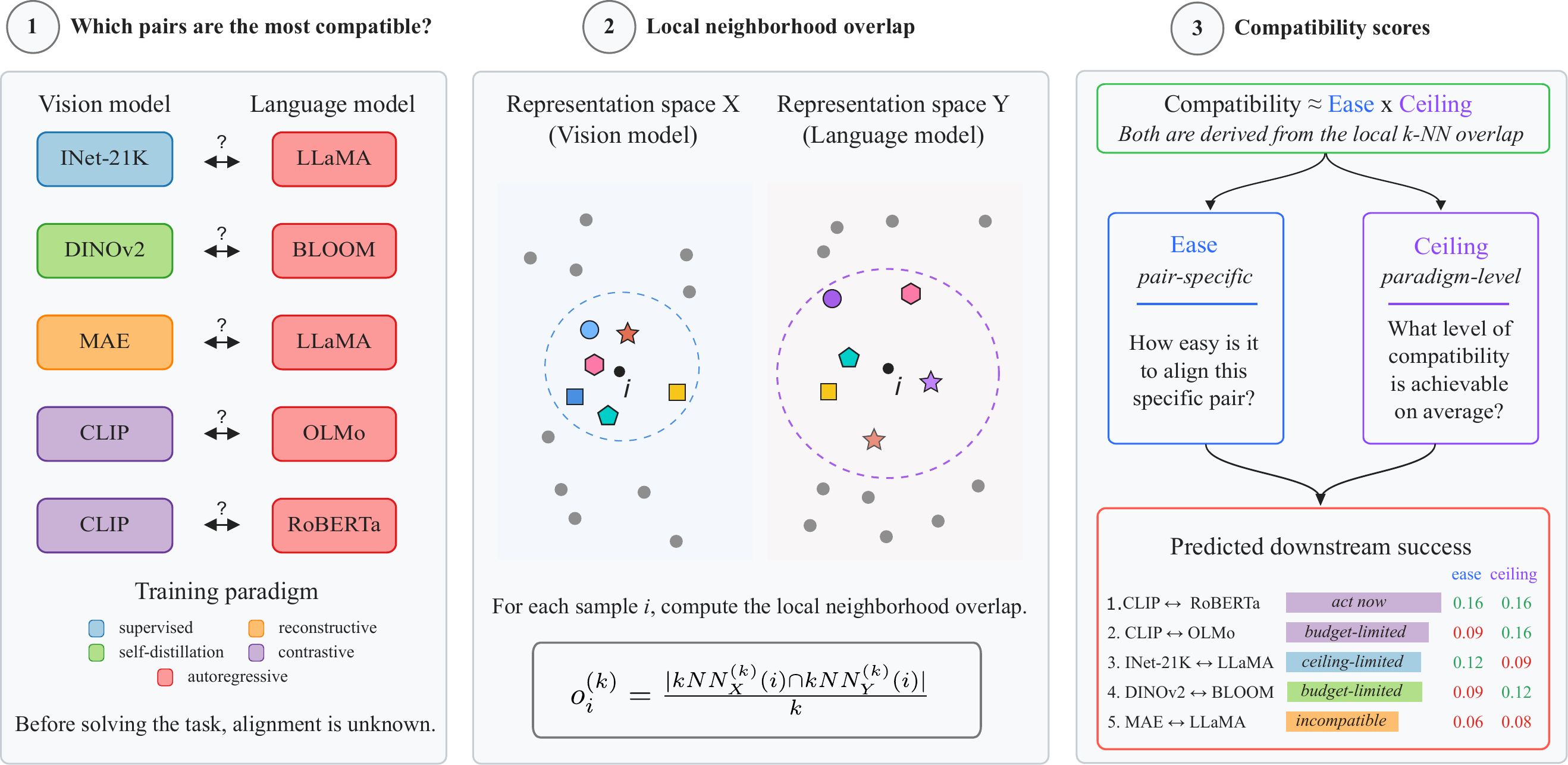}
    \caption{
    Given a library of candidate encoders, the compatibility of a model pair is typically unknown before running the downstream transfer.
    Using \name, we estimate the compatibility based on local neighborhood agreement: for each probe sample, we compare its local $k$-NN overlap across the two representation spaces.
    Aggregating this local overlap across different axes gives two scores computed before alignment or merging.
    \emph{Ease} is a pair-specific score that measures how close the selected pair is at the start of the relevant dynamics: before encoder training in cross-modal alignment, and before weight merging in task arithmetic.
    \emph{Ceiling} is a paradigm-level score that estimates how much agreement the corresponding training paradigms can support on average.
    Candidate pairs are ranked by combining ease and ceiling.
    Each axis is thresholded at its median value: green compatibility scores are above the corresponding median while the red ones are below it.
    The resulting high/low pattern assigns each pair to one of four regimes: \emph{act now} (high ease, high ceiling), \emph{budget-limited} (low ease, high ceiling), \emph{ceiling-limited} (high ease, low ceiling), or \emph{incompatible} (low ease, low ceiling).
    }
    \label{fig:overview}
\end{figure}

\looseness=-1
Here, we close the gap and give a theoretical motivation for why local overlap should predict model-pair compatibility. Motivated by this, we introduce \name (COMPatibility ASsessment Score), a diagnostic that combines two cheap-to-compute axes (\Cref{fig:overview}). The first axis, the \emph{ease} score, asks: How much agreement does a pair exhibit before any training? 
Pairs that already agree on local neighborhoods need only modest training for them to locally agree, whereas pairs with low initial agreement require substantially more training. 
The second axis, the \emph{ceiling} score, asks: How much agreement can the training paradigms of model pairs support in general before training, averaged across models drawn from those paradigms? 
Paradigms with very different training objectives (\textit{e.g.}, reconstructive vs. autoregressive) exhibit limited agreement, regardless of how well aligned the pair is.

We combine ease and ceiling multiplicatively to obtain the continuous \name score, which ranks candidate pairs by favoring those that are strong on both axes. For interpretability, thresholding each axis at its median gives a phase diagram that groups candidate pairs into four regimes: \emph{act now} (high ease, high ceiling), \emph{budget-limited} (low ease, high ceiling), \emph{ceiling-limited} (high ease, low ceiling), and \emph{incompatible} (low ease, low ceiling). 
Act-now pairs are the strongest immediate candidates, and their transfer is expected to be both feasible and efficient. 
For budget-limited pairs, the success may require more data, optimization, or a stronger partner. 
For ceiling-limited pairs, one should switch to a paradigm whose target is closer to the downstream target. 
Incompatible pairs are weak on both axes and should be deprioritized.
The regimes summarize the ranking; the continuous score remains the primary output.

We evaluate \name in two applications: cross-modal alignment and task-arithmetic merging. In cross-modal alignment, we consider $72$ vision-language pairs and show that \name predicts retrieval R@1 with Pearson correlation of $0.90$ on the Flickr30k dataset~\citep{plummer2015flickr30k} and $0.91$ on the COCO dataset~\citep{lin2014microsoft} , compared to the Pearson correlation of $0.63$ and $0.68$ for global CKA~\citep{kornblith2019similarity}. In task-arithmetic merging, we consider $392$ task pairs from $14$ ViT checkpoints fine-tuned on the $8$ image-classification tasks from the standard task-arithmetic benchmark~\citep{ilharco2023editing}. \name predicts mean retention across the $8$ image-classification tasks with Pearson correlation of $0.70$, compared to $0.25$ for global CKA~\citep{kornblith2019similarity}.

\looseness=-1
\textbf{Contributions.}
\textit{(i)} We introduce \name, a diagnostic for selecting model pairs before expensive alignment or merging, combining a pair-specific \emph{ease} score with a paradigm-level \emph{ceiling} score.
\textit{(ii)} We provide a supporting analysis on how quickly the two models will become better aligned, given how close they start, and on a bound that says how close two models' local neighborhoods can ever get, given their training paradigms.
\textit{(iii)} We validate
\name across cross-modal alignment and task-arithmetic merging, demonstrating that it strongly correlates with downstream transfer success and reliably ranks candidate pairs before transfer.  

%% file: Sections/RelatedWork.tex
\vspace{-0.5 em}
\textbf{Representational convergence.}
Representational convergence studies whether independently trained models recover similar structure in their learned representations.
Early evidence showed that different networks trained on the same task learn similar features~\citep{li2016convergent} and that representations from different networks can often be stitched together~\citep{bansal2021revisiting}, suggesting shared structure across independently trained models.
At the scale of foundation models, the Platonic Representation Hypothesis~\citep{huh2024platonic} proposed that sufficiently broad and capable models may converge toward a shared representation. The Aristotelian Representation Hypothesis \citep{groeger2026aristotelian} questioned this global view and showed that only local neighborhood alignment can be empirically observed.
\citet{gupta2026canonicalizing} demonstrated strong convergence within CLIP-style architectures.
Together, these works provide growing empirical evidence that representational convergence is real but more often local than global.
We build on this view but shift the focus from observation to application by using local agreement of trained models to select suitable model pairs for alignment and merging.

\textbf{Structure of training objectives.}
Training objectives shape the geometry of learned representations by specifying the conditional relationships that the model is trained to learn.
This makes them central to model compatibility: two models trained with similar objectives may induce similar neighborhoods, whereas models trained with different objectives may preserve different notions of similarity.
The I-Con framework~\citep{alshammari2025icon} unified contrastive, non-contrastive, and clustering objectives by minimizing Kullback--Leibler (KL)  divergence over conditional distributions. 
We use this view to connect objective-level similarity to local representational agreement.
If two training paradigms induce similar neighborhood structure, their representations can support higher local agreement; if they induce different neighborhoods, downstream transfer is limited even before optimization begins.
This motivates our paradigm-level \textit{ceiling} score, which estimates how much local agreement a pair of training paradigms can support.

\textbf{Multi-modal alignment.}
Multi-modal alignment aims to map representations from different modalities into a shared space where semantically corresponding inputs are close~\citep{zhai2022lit,li2023blip2,alayrac2022flamingo}.
A central difficulty is that different modalities may retain distinct geometry even after alignment. 
\citet{liang2022modality} identified such a persistent ``modality gap'' in CLIP representations, showing that image and text embeddings occupy distinct regions of the shared space even after contrastive training.
Most work on alignment focuses on improving the alignment method~\citep{merullo2023linearly,koh2023grounding} or reducing the amount of paired data required for alignment~\citep{li2025csa,groger2025limited}.
Our goal is complementary; we provide a pre-alignment diagnostic for model selection. 
We show that local $k$-NN overlap is cheap to compute on a probe set, suggesting which frozen encoder pairs are worth aligning before expensive training is run.

\textbf{Weight-space transfer and model merging.}
Model merging combines multiple trained models directly in weight space, with the goal of reusing their capabilities without training a new model from scratch. 
Examples include model soups~\citep{wortsman2022model}, which average fine-tuned weights, task arithmetic~\citep{ilharco2023editing}, which edits behavior via weight-space task vectors, and evolutionary merging~\citep{akiba2024evolutionary}, which searches over merging recipes across tasks.
These methods are most effective when checkpoints being combined remain compatible; otherwise, averaging can interpolate into a region that preserves neither model well.
Existing merging approaches typically discover this only after performing the merge and evaluating the result \citep{wortsman2022model,ilharco2023editing,akiba2024evolutionary}.
In contrast, we study whether mergeability can be estimated beforehand.
For two fine-tuned checkpoints, we measure whether they agree on local neighborhoods and use this agreement to predict merge quality before actual model merging.

%% file: Sections/Preliminaries.tex
We measure agreement using local $k$-NN overlap, the fraction of nearest neighbors that two encoders share for each sample. \citet{groeger2026aristotelian} showed that representational convergence persists across architectures and modalities at the level of local neighborhoods, while global agreement is not observed. \Cref{app:hierarchy} confirms that, on our model library, local overlap correlates with downstream transfer more strongly than global agreement (\textit{e.g.}, CKA~\citep{kornblith2019similarity}) does.

\textbf{Setup.}
We assume a probe set of $N$ samples indexed by $V := \{1, \dots, N\}$. 
Two encoders $X$ and $Y$ embed each sample $i \in V$, producing embeddings of possibly different dimensions.
The theory is agnostic to the choice of probe set; the empirical setup is described in \Cref{app:setup}.

\textbf{Neighborhood overlap.} For each sample $i$ and each encoder, we list the $k$ nearest other samples in the probe set. The neighborhood overlap at $i$ is the fraction of this set that is shared between the two encoders, quantifying how strongly they agree on which samples are neighbors of $i$, irrespective of their order.

Let $\kNN_X^{(k)}(i) \subseteq V \setminus \{i\}$ denote the $k$ nearest neighbors of $i$ in encoder $X$ (ties broken by a fixed deterministic rule).
The per-sample overlap and its dataset-level average are respectively defined as:
\begin{equation}
\label{eq:overlap}
o_i(k) := \frac{|\kNN_X^{(k)}(i) \cap \kNN_Y^{(k)}(i)|}{k},
\quad
O(k) := \tfrac{1}{N}\sum\nolimits_{i=1}^N o_i^{(k)},
\end{equation}
where the per-sample overlap measures agreement around a single sample $i$, and 
its dataset-level average is the standard mutual $k$-NN overlap~\citep{huh2024platonic}, which we refer to as the ``overlap''.
To simplify notation, we fix $k = 10$ throughout the paper as in \citep{huh2024platonic, groeger2026aristotelian} and drop the argument $(k)$. \Cref{app:robustness_k_metric} verifies that the results are unchanged at $k \in \{20, 50, 100\}$.

\textbf{Neighborhood distribution.} To apply information-theoretic tools, we encode the $k$-NN set of each encoder as a probability distribution that places mass $1/k$ on each neighbor and zero elsewhere. We call this the \emph{hard neighborhood distribution} of encoder $X$ at sample $i$, a uniform-mass $k$-NN distribution of the form used in~\citet{alshammari2025icon}:
\begin{equation}
\label{eq:hard_distribution}
p_X(j \mid i) :=
\begin{cases}
\nicefrac{1}{k}, & j \in \kNN_X(i), \\
0, & \text{otherwise},
\end{cases}
\end{equation}
with $p_Y(j \mid i)$ defined analogously.

To ensure that the KL divergences in \Cref{thm:local_convergence} are well-defined, we consider a smoothed version of each hard neighborhood distribution with full support over the non-self indices.
Let $\nu(j \mid i) := \tfrac{1}{N-1}\,\mathbbm{1}\{j \neq i\}$  denote the uniform distribution over the non-self indices $V \setminus \{i\}$. For a fixed small constant $\varepsilon \in (0,1)$, we define
\begin{equation}
\label{eq:soft_distribution}
q_X(j \mid i) := (1-\varepsilon) \, p_X(j \mid i) + \varepsilon \, \nu(j \mid i).
\end{equation}

%% file: Sections/Theory_v2.tex
We motivate \name through sufficient conditions under which local overlap is
informative for compatibility. \Cref{thm:joint_proximity} connects start-condition overlap to
finite-budget convergence, motivating \emph{ease}; \Cref{thm:local_convergence} connects target
discrepancy to achievable overlap, motivating \emph{ceiling}. These results provide
supporting motivation rather than universal guarantees; assumptions and failure
modes are discussed in \Cref{app:limitations} and \Cref{app:assumptions}.

\subsection{Pair-specific ease: per-sample start-condition overlap}
\label{sec:overlap}

We first consider per-sample start-condition overlap and ask whether two models that are compatible in principle can reach high agreement within a finite training budget. Specifically, we consider the start-condition neighborhood overlap, $o_i^{(0)}$, defined as the per-sample overlap from \Cref{eq:overlap}, measured at $t=0$ of the relevant training dynamics, that is before alignment in the cross-modal setting and before weight merging in the task-arithmetic setting.

For a fixed anchor sample $i$, let  $q_W^{(t)}(\cdot \mid i)$ denote the neighborhood distribution induced by model $W \in \{X,Y\}$ at training step $t$, and let $q_W^\star(\cdot \mid i)$ denote the target neighborhood distribution that the model is trying to reach. 
We regard two models as having converged towards each other when the total variation distance (TV) between their induced neighborhood distributions, \mbox{$\delta_i^{(t)} := \TV(q_X^{(t)}(\cdot \mid i),\, q_Y^{(t)}(\cdot \mid i))$},
 falls below the tolerance threshold $\eta$.
We now impose two dynamical assumptions on the training trajectory.
 First, we assume that after $T$ training steps, at least one model lies within total variation distance $\xi$ of its target distribution $q_W^\star(\cdot \mid i)$ with probability at least $p_T$. Second, we suppose that after $t$ training steps the expected per-sample TV gap between the two models' neighborhood distributions, $\delta_i^{(t)}$, is at most $\lambda(t)\,\delta_i^{(0)}$, for a nonincreasing decay function $\lambda$ with $\lambda(t)\to 0$ as $t\to\infty$.
Full statements and discussion of failure modes are in \Cref{app:assumptions}.

\begin{theorem}[Per-sample start-condition overlap controls finite-time convergence]
\label{thm:joint_proximity}
Under the assumptions stated above, for every $\eta > 0$ and for each sample $i$, with probability at least
\begin{equation}
\label{eq:joint_proximity_bound}
\left[p_T - \frac{\lambda(T)(1-\varepsilon)\bigl(1-o_i^{(0)}\bigr)}{\eta}\right]_+
\end{equation}
after $T$ training steps, at least one of the two models is within distance $\xi$ of its target neighborhood distribution and the two models remain within gap $\eta$ of each other. Here $\varepsilon$ is the smoothing parameter in the smoothed $k$-NN distribution from \Cref{eq:soft_distribution}, and $[z]_+ := \max\{z,0\}$.
\end{theorem}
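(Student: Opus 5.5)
The proof combines three ingredients: an exact computation of the initial gap $\delta_i^{(0)}$ in terms of $o_i^{(0)}$, Markov's inequality applied to the gap at time $T$, and a union bound with the event that one model is $\xi$-close to its target.

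\textbf{Step 1: computing $\delta_i^{(0)}$.} At $t=0$ the smoothed distributions of \Cref{eq:soft_distribution} share the same uniform component $\varepsilon\,\nu(\cdot\mid i)$. It cancels in the difference, so
\begin{equation*}
q_X^{(0)}(\cdot\mid i)-q_Y^{(0)}(\cdot\mid i)=(1-\varepsilon)\bigl(p_X^{(0)}(\cdot\mid i)-p_Y^{(0)}(\cdot\mid i)\bigr).
\end{equation*}
Hence $\delta_i^{(0)}=(1-\varepsilon)\,\TV(p_X^{(0)},p_Y^{(0)})$. The two hard distributions of \Cref{eq:hard_distribution} are uniform on $k$-sets $A=\kNN_X(i)$ and $B=\kNN_Y(i)$. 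Their difference is $1/k$ on $A\setminus B$, $-1/k$ on $B\setminus A$, and zero elsewhere. Therefore
\begin{equation*}
\TV=\tfrac12\cdot\tfrac{2(k-|A\cap B|)}{k}=1-o_i^{(0)},
\end{equation*}
and so $\delta_i^{(0)}=(1-\varepsilon)(1-o_i^{(0)})$. This is a deterministic identity given the initial encoders.

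\textbf{Step 2: controlling the gap at time $T$.} The second dynamical assumption gives
\begin{equation*}
\mathbb{E}[\delta_i^{(T)}]\le \lambda(T)\,\delta_i^{(0)}=\lambda(T)(1-\varepsilon)(1-o_i^{(0)}).
\end{equation*}
Since $\delta_i^{(T)}\ge 0$, Markov's inequality yields
\begin{equation*}
\Pr\bigl[\delta_i^{(T)}\ge \eta\bigr]\le \frac{\lambda(T)(1-\varepsilon)(1-o_i^{(0)})}{\eta}.
\end{equation*}
Depending on whether the theorem says "within gap $\eta$" strictly or not, I would use the strict or non-strict event; Markov covers either.

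\textbf{Step 3: combining the events.} Let $E_1$ be the event that at least one model is within $\xi$ of its target, so $\Pr[E_1]\ge p_T$ by the first assumption. Let $E_2=\{\delta_i^{(T)}<\eta\}$. The elementary bound $\Pr[E_1\cap E_2]\ge \Pr[E_1]-\Pr[E_2^c]$ gives
\begin{equation*}
\Pr[E_1\cap E_2]\ge p_T-\frac{\lambda(T)(1-\varepsilon)(1-o_i^{(0)})}{\eta}.
\end{equation*}
Probabilities are nonnegative, so we may take the positive part, which is exactly \Cref{eq:joint_proximity_bound}.

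\textbf{Main obstacle.} No step is technically hard. The point needing care is that both assumptions refer to the same probability space, namely the randomness of training. One must also check that the expectation assumption is stated conditionally on the fixed initialization, so that $\delta_i^{(0)}$, and hence $o_i^{(0)}$, can be treated as a constant in Markov's inequality. I would state this explicitly, following \Cref{app:assumptions}.
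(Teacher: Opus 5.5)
Your proposal is correct and follows the paper's proof essentially step for step. The paper uses the same four ingredients: the identity $\delta_i^{(0)}=(1-\varepsilon)(1-o_i^{(0)})$ from the overlap--TV and smoothing propositions, the split $\mathbb{P}(A\cap B)\ge\mathbb{P}(A)-\mathbb{P}(B^c)$, A1 for the first term, and conditional Markov with A2 for the second, followed by the positive part. Your remark that both assumptions must be read conditionally on the start condition matches how the paper states A1 and A2, namely conditionally on $\delta_i^{(0)}$.
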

\noindent\emph{Proof.} See \Cref{app:proof_overlap}. \emph{Extension.} Under an additional basin condition (\Cref{app:assumptions}), this extends to asymptotic convergence (\Cref{app:corollary}).  \hfill$\square$

\begin{figure}[t]
\centering
\includegraphics[width=\linewidth]{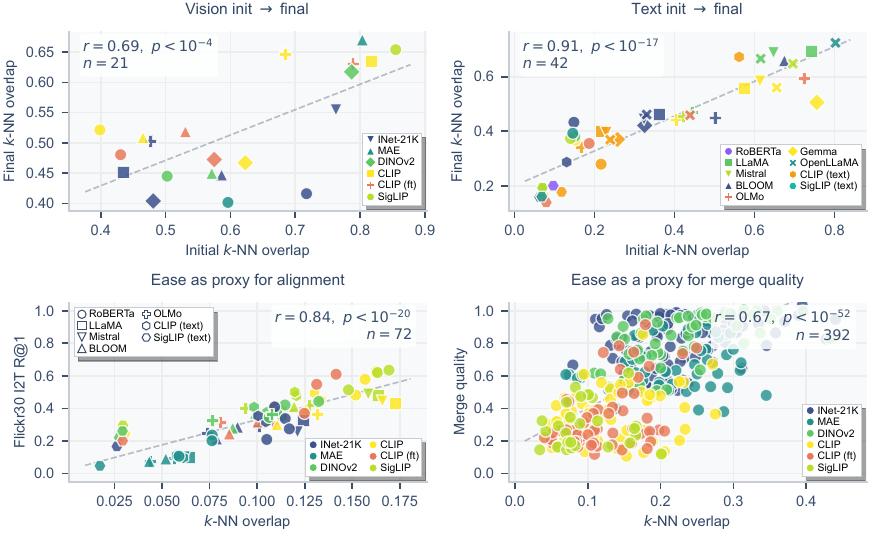}
\caption{
\Cref{thm:joint_proximity} across four settings, each showing a start-condition score against the corresponding endpoint.
\emph{Top:} unimodal validation. Initial random $k$-NN overlap predicts final pretrained overlap across $n = 21$ vision family pairs (left; $r = 0.69$, $p < 10^{-4}$) and $n = 42$ text family pairs (right; $r = 0.91$, $p < 10^{-17}$).
\emph{Bottom left:} cross-modal alignment. Pretrained $k$-NN overlap predicts Flickr30k I2T R@1 after alignment ($r = 0.84$, $p < 10^{-20}$) across $n = 72$ vision-language pairs.
\emph{Bottom right:} model merging via task arithmetic. $k$-NN overlap between fine-tuned checkpoints predicts merge quality ($r = 0.67$, $p < 10^{-52}$) across  $n = 392$ model merges.
Color encodes the vision/text family throughout. 
In the top panels (unimodal) and bottom-left (cross-modal alignment), marker shape encodes the other family of the pair (text family in the cross-modal case).
The bottom-right merging panel uses a single shape, since each point has a single base-model family.
Initialization structure that underlies the unimodal panels is reported in \Cref{app:init_structure}. The same qualitative pattern holds at $k \in \{20,50,100\}$ (\Cref{app:robustness_k_metric}).
}
\label{fig:init_structure}
\end{figure}

The bound motivates using per-sample start-condition overlap as the pair-specific \emph{ease} score: it is measurable before any joint optimization, and higher values reduce the initial inter-model gap $\delta_i^{(0)}=(1-\varepsilon)(1-o_i^{(0)})$, thereby tightening the finite-time guarantee. Pairs with higher per-sample start-condition overlap are therefore easier to make compatible within a fixed training budget.

\looseness=-1
In practice, the start-condition overlap is the per-sample overlap measured at the start of the relevant training dynamics, $o_i^{(0)}$, which differs by setting.
In unimodal convergence, the dynamics goes from a random initialization to the pretrained model, so the per-sample start-condition overlap is the per-sample overlap between two random initializations of the architecture.
In cross-modal alignment, the dynamics goes from the two frozen pretrained models to the trained adapter, so the per-sample start-condition overlap is the pretrained per-sample overlap between those models.
In task-arithmetic merging, the dynamics start from the two fine-tuned checkpoints to their weight-averaged combination, so the per-sample start-condition overlap is the per-sample overlap between the two checkpoints. \Cref{fig:init_structure} shows the per-sample start-condition overlap against the corresponding endpoint across different settings.

\subsection{Paradigm-level ceiling: target discrepancy}
\label{sec:why_local}
We now look into the paradigm-level component of compatibility. Even if two models start from a favorable pair-specific per-sample start-condition overlap, their training objectives \citep{alshammari2025icon} (\emph{e.g.}, contrastive, reconstructive, self-distillation) may result in different neighborhoods. Such disagreement limits the per-sample overlap that the two representations can support, independently of the finite training budget. We formalize this obstruction through a target-discrepancy term and show that it determines the paradigm-level ceiling used in \name. 

For each model $W \in \{X,Y\}$, we define $q_W^{\star}(\cdot \mid i)$ to be the
target conditional distribution over candidate neighbors of anchor sample $i$,
supported on all samples except $i$ itself. The learned smoothed neighborhood
distribution is $q_W^{(t)}(\cdot \mid i)$, and the training objective is represented
as minimizing the KL divergence
$\KL\bigl(q_W^{\star}(\cdot \mid i)\,\|\,q_W^{(t)}(\cdot \mid i)\bigr)$.
Define the \emph{target discrepancy} as:
\begin{equation}
\label{eq:target_discrepancy}
\gamma_i := \TV\bigl(q_X^{\star}(\cdot \mid i),\, q_Y^{\star}(\cdot \mid i)\bigr),
\end{equation}
 where $\gamma_i$ measures how differently the two paradigms define neighbors of sample $i$. Small $\gamma_i$ means that the two objectives induce similar local topology, whereas large $\gamma_i$ means that the objectives favor incompatible neighborhood structures. 

In practice, the targets $q_W^{\star}$ are not directly observable; we only need to estimate $\bar\gamma := \mathbb{E}_i[\gamma_i]$, the \emph{mean target discrepancy} across the probe set. This quantity summarizes how strongly the two training paradigms disagree in the local neighborhoods they induce.

\begin{theorem}[Target discrepancy caps per-sample overlap]
\label{thm:local_convergence}
For each sample $i$ at time step $t$,
\begin{equation}
o_i^{(t)} \le 1 - \frac{1}{1-\varepsilon}\Biggl[\gamma_i -  \sqrt{\tfrac{1}{2}\KL\bigl(q_X^{\star}(\cdot \mid i) \,\|\, q_X^{(t)}(\cdot \mid i)\bigr)} - \sqrt{\tfrac{1}{2}\KL\bigl(q_Y^{\star}(\cdot \mid i) \,\|\, q_Y^{(t)}(\cdot \mid i)\bigr)}\Biggr].
\end{equation}
\end{theorem}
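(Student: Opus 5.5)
The argument has two parts. First, compute the total variation distance between the two learned smoothed distributions exactly in terms of the per-sample overlap. Second, lower-bound that distance by the target discrepancy $\gamma_i$ using the triangle inequality and Pinsker's inequality. Throughout, $q_X^{(t)}$ and $q_Y^{(t)}$ are the smoothed hard $k$-NN distributions of \Cref{eq:soft_distribution}, built from the encoders' neighbor sets at step $t$.

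\textbf{Step 1: TV in terms of overlap.} The uniform component $\varepsilon\,\nu(\cdot\mid i)$ is common to both distributions, so it cancels in the difference:
\begin{equation*}
q_X^{(t)}(\cdot\mid i)-q_Y^{(t)}(\cdot\mid i)=(1-\varepsilon)\bigl(p_X^{(t)}(\cdot\mid i)-p_Y^{(t)}(\cdot\mid i)\bigr).
\end{equation*}
Each hard distribution places mass $1/k$ on its $k$ neighbors. The difference therefore vanishes on the shared neighbors and has absolute value $1/k$ on the symmetric difference of the two neighbor sets. That symmetric difference has $2k(1-o_i^{(t)})$ elements. Hence
\begin{equation*}
\TV\bigl(p_X^{(t)}(\cdot\mid i),p_Y^{(t)}(\cdot\mid i)\bigr)=\tfrac12\cdot\tfrac{2k(1-o_i^{(t)})}{k}=1-o_i^{(t)},
\end{equation*}
and consequently $\delta_i^{(t)}=(1-\varepsilon)(1-o_i^{(t)})$. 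This matches the identity already used after \Cref{thm:joint_proximity}.

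\textbf{Step 2: triangle inequality.} Since TV is a metric on distributions over $V\setminus\{i\}$,
\begin{equation*}
\gamma_i=\TV(q_X^\star,q_Y^\star)\le \TV(q_X^\star,q_X^{(t)})+\TV(q_X^{(t)},q_Y^{(t)})+\TV(q_Y^{(t)},q_Y^\star).
\end{equation*}
Pinsker's inequality bounds each outer term: $\TV(q_W^\star,q_W^{(t)})\le\sqrt{\tfrac12\KL(q_W^\star\,\|\,q_W^{(t)})}$. This is well defined because the smoothing gives $q_W^{(t)}$ full support on the non-self indices, and $q_W^\star$ is supported on that same set. Rearranging gives
\begin{equation*}
(1-\varepsilon)(1-o_i^{(t)})\ge \gamma_i-\sqrt{\tfrac12\KL(q_X^\star\|q_X^{(t)})}-\sqrt{\tfrac12\KL(q_Y^\star\|q_Y^{(t)})}.
\end{equation*}
Dividing by $1-\varepsilon>0$ and solving for $o_i^{(t)}$ yields the stated bound. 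If the bracketed term is negative, the bound exceeds $1$ and holds trivially.

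\textbf{Main obstacle.} The step needing the most care is the exact computation in Step 1. It relies on both neighbor sets having size exactly $k$, which the fixed deterministic tie-breaking rule guarantees. It also relies on the uniform smoothing component being identical for $X$ and $Y$, so that it cancels exactly rather than only approximately. Everything else is a standard metric and Pinsker argument.
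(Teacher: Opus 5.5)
Your proof is correct and follows the paper's own argument: it also cancels the shared uniform component to get $\TV(q_X^{(t)},q_Y^{(t)})=(1-\varepsilon)(1-o_i^{(t)})$, then applies the reverse triangle inequality through the two targets and Pinsker's inequality to the outer terms. The one cosmetic difference is that the paper computes the hard-distribution TV via $1-\sum_j\min\{p_X(j\mid i),p_Y(j\mid i)\}$, whereas you count the symmetric difference of the two neighbor sets; the two are equivalent.
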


\noindent\emph{Proof.} See \Cref{app:proof_local_convergence}.  \hfill$\square$

Continued optimization drives the two KL terms to zero but cannot reduce $\gamma_i$, so achievable per-sample overlap is bounded by $1 - \gamma_i/(1-\varepsilon)$ even under perfect optimization.
Averaging the bound over samples yields a paradigm-level statement: the mean overlap is bounded above by $1 - \bar\gamma/(1-\varepsilon)$ in the limit of perfect optimization. This motivates an empirical paradigm-level proxy for the achievable overlap, which we call the \emph{ceiling} score. 

\subsection{Implications from combining ease and ceiling}
Together, \Cref{thm:joint_proximity} and \Cref{thm:local_convergence} motivate complementary finite-budget and achievable agreement components of compatibility.

The event in \Cref{thm:joint_proximity} has two parts: at least one model reaches its target within tolerance $\xi$, and the two models remain within gap $\eta$ of each other. If the two models $X$ and $Y$ share the same target $q^\star(\cdot \mid i)$, then the triangle inequality implies that both models are within $\xi + \eta$ of this shared target. More generally, when the targets differ, if model $X$, for example, is within $\xi$ of its target, then model $Y$ is within $\xi + \eta$ of model $X$'s target and within $\xi + \eta + \gamma_i$ of its own target, where \(\gamma_i\) is the target discrepancy defined in
\Cref{eq:target_discrepancy}. Thus, start-condition overlap controls the
finite-time probability of reaching a common neighborhood structure, while any
mismatch between the targets appears as an additional obstruction. 
The bound increases strictly in the per-sample start-condition overlap $o_i^{(0)}$, so pairs that start with more shared neighbors have a tighter finite-time guarantee.

%% file: Sections/Practical.tex
\begin{figure}[b]
\centering
\includegraphics[width=\linewidth]{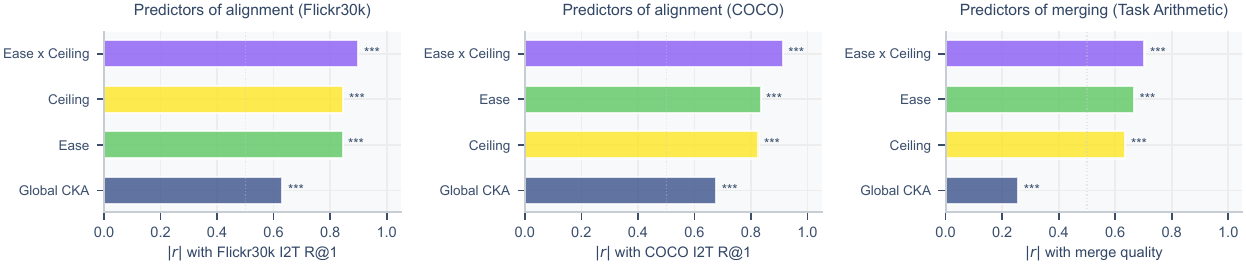}
\caption{
Absolute Pearson correlations between each candidate predictor and the downstream outcome; significance against zero is indicated by $^{*}\!p < 0.05$, $^{**}\!p < 0.01$, $^{***}\!p < 0.001$.
\emph{Left/Center:} cross-modal alignment evaluated on Flickr30k and COCO I2T R@1 ($n = 72$ pairs). Ease (pretrained $k$-NN overlap) and ceiling (paradigm-level cross-modal overlap) dominate, with their product (\emph{Ease}~$\times$~\emph{Ceiling}) reaching $|r| = 0.90$ (Flickr30k) and $|r| = 0.91$ (COCO).
\emph{Right:} task-arithmetic merging against mean retention ($n = 392$).
The product predictor (\emph{Ease}~$\times$~\emph{Ceiling}) reaches $|r| = 0.70$, with ease at $|r| = 0.67$ and ceiling at $|r| = 0.64$ alone.
}
\label{fig:predictors}
\end{figure}

The pair-specific \emph{ease} and paradigm-level \emph{ceiling} scores together provide a measure of model compatibility for the two applications we consider: \emph{(i)} cross-modal alignment, and \emph{(ii)} model merging. We combine them into a simple diagnostic recipe, \hyperref[box:compass]{\name} (COMPatibility ASsessment Score), which ranks candidate model pairs before any transfer is performed. Both quantities are inexpensive to compute, requiring only forward passes over a small probe set.

The probe set should be drawn from the relevant data distribution, \textit{i.e.}, image-text pairs for cross-modal alignment (where the diagnostic measures vision-text agreement) and task-relevant images for merging fine-tuned checkpoints (where it measures agreement between models specialized to those tasks). 
In practice, we use $1024$ image-text pairs from WIT~\citep{srinivasan2021wit} for alignment, and a $2048$-sample probe pooled from one of the eight fine-tuning tasks for merging, here MNIST~\citep{lecunmnist} (\Cref{app:probe_sensitivity} ablates the choice).

Candidate pairs are ranked by the product of ease and ceiling scores. Importantly, feature extraction scales with the number of models rather than candidate pairs. For the $19 \times 10$ alignment grid, \name requires only 29 probe passes instead of 190 downstream alignment runs. In our setup, this translates to more than two orders of magnitude lower wall-clock cost, with full timing details reported in \Cref{app:ceiling_robustness}.

    \begin{compassbox}[label={box:compass}]
  \textbf{\faCompass\enspace COMPASS: Model compatibility diagnostic.} 
  Given a library of candidate models and a way to combine them (\textit{e.g.}, cross-modal alignment or task-arithmetic model merging):
  \begin{enumerate}[wide, labelwidth=!, labelindent=0pt]
      \item \textbf{Compute ease and ceiling on the probe set.} \emph{Ease} is the pair-specific $k$-NN overlap measured at the start of the relevant training dynamics. \emph{Ceiling} is a paradigm-level mean $k$-NN overlap that estimates the upper bound on agreement the two paradigms can support. 
      Both scores lie in $[0, 1]$; higher score is better.
      \item \textbf{Rank pairs by the product of ease and ceiling.} Produce ranking of pairs by combining ease and ceiling scores. Pairs with higher scores should be prioritized.
      \item \textbf{Read the quadrant as a fallback diagnostic.} Splitting each axis at its median yields four regimes: \emph{ (i) act now} (high ease, high ceiling), \emph{(ii) budget-limited} (low ease, high ceiling), \emph{(iii) ceiling-limited} (high ease, low ceiling), and \emph{(iv) incompatible} (low ease, low ceiling). For \emph{budget-limited}, invest in more data, optimization, or a stronger partner. For \emph{ceiling-limited}, switch to a paradigm whose target is closer to the downstream target. Deprioritize \emph{incompatible} pairs.
  \end{enumerate}
  \end{compassbox}

\Cref{fig:predictors} compares the correlation between candidate predictors and the downstream outcome in cross-modal alignment and model merging via task arithmetic. In cross-modal alignment, we measure the downstream outcome as image-to-text Recall@1 (R@1) after training a CSA adapter, evaluated on Flickr30k and COCO datasets.
The product \emph{ease}~$\times$~\emph{ceiling} reaches Pearson correlation of $|r| = 0.90$ with R@1 on the Flickr30k I2T dataset and $|r| = 0.91$ with R@1 on the COCO I2T dataset, compared to $0.84/0.83$ for ease alone and $0.85/0.83$ for ceiling alone, respectively.
In task-arithmetic model merging, we measure the downstream outcome as the mean of per-task accuracy retained relative to the unmerged checkpoints after weight averaging, evaluated across 392 task-pair merges from 14 pretrained ViT backbones spanning five pretraining paradigms.
The product \emph{ease}~$\times$~\emph{ceiling} reaches $|r| = 0.70$ with merge quality, compared to $0.67$ for ease alone and $0.64$ for ceiling alone.
Global CKA, in contrast, correlates only weakly with downstream outcomes in both applications ($0.63/0.68$ for alignment, $0.25$ for merging).

Because the practical goal is model selection, we next ask whether this advantage is preserved at the level of pair rankings. We compare \name with the alternative predictors using Spearman $\rho$ and Kendall $\tau$. \name remains the strongest predictor, reaching Spearman $\rho = 0.91/0.93$ on Flickr30k/COCO and $0.70$ for merging; full results are reported in \Cref{app:ranking_selection}. Under budgeted evaluation, \name finds the optimal alignment pair immediately and the optimal merge within the first 5 of 392 candidates (\Cref{app:ranking_selection}).

We further test whether this advantage depends on the particular set of predictors considered in \Cref{fig:predictors}. Comparisons with metadata predictors, alternative representational similarities, and learned metadata regressors are reported in \Cref{app:baselines}. \name outperforms the baselines computed before alignment or merging; learned regressors can perform strongly when trained on completed alignment or merging outcomes, but degrade under held-out families or paradigms.

\subsection{\name for cross-modal alignment}
\label{sec:practical_align}

\begin{figure}[b]
\centering
\includegraphics[width=\linewidth]{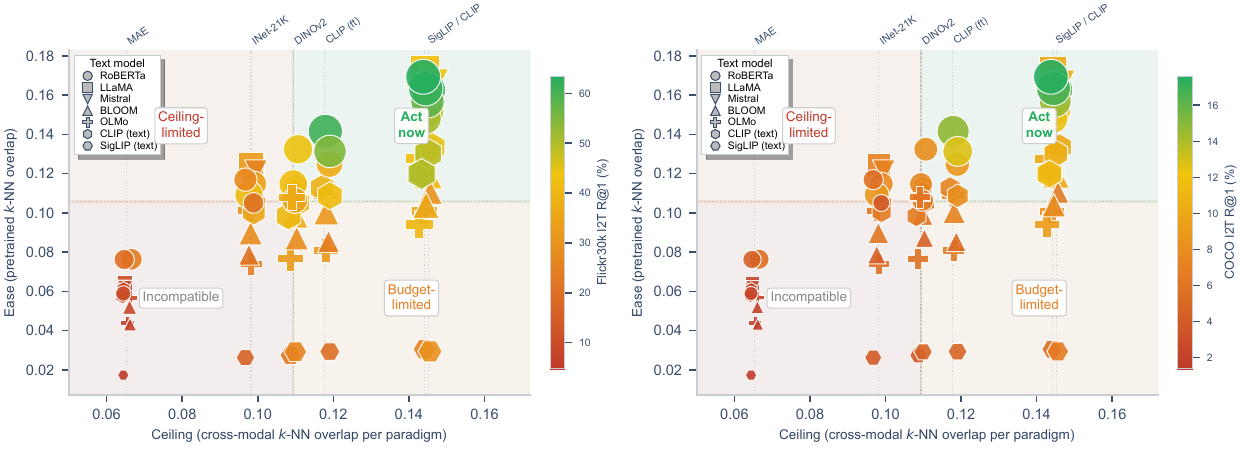}
\caption{
Alignability phase diagram ($n = 72$ pairs). $x$-axis: paradigm-level mean cross-modal $k$-NN overlap (ceiling). $y$-axis: pretrained $k$-NN overlap (ease). Color and size encode retrieval I2T R@1 after alignment. \emph{Left:} Flickr30k. \emph{Right:} COCO. The quadrant structure is consistent across both benchmarks}
\label{fig:phase}
\end{figure}

Cross-modal alignment trains a learned adapter on paired image-text data so that a frozen vision encoder and a frozen language model produce compatible representations~\citep{zhai2022lit,li2023blip2,alayrac2022flamingo,merullo2023linearly,koh2023grounding,groger2025limited}.
Following the setup of \citet{groger2025limited}, we connect the two encoders with a small learned adapter (CSA~\citep{li2025csa}) trained on MS-COCO~\citep{lin2014microsoft} dataset.
Downstream performance is measured on two retrieval benchmarks Flickr30k~\citep{plummer2015flickr30k} and COCO~\citep{lin2014microsoft} datasets using image-to-text R@1 after alignment.
We evaluate $72$ vision-language pairs spanning $19$ vision models ($6$ paradigm families) and $10$ text encoders ($7$ families); see \Cref{app:setup_library} for the full library.

For alignment, \emph{ease} is the pretrained cross-modal $k$-NN overlap of the vision-text pair, while \emph{ceiling} averages the vision encoder's overlap across the reference language-model library; both are measured on the 1024-sized subset from WIT as in \citep{huh2024platonic} before any alignment training (\Cref{app:score_computation}). Intuitively, a vision paradigm whose pretraining target is already linked to language (contrastive vision-language) has a high ceiling, whereas one whose target is pixel reconstruction (masked autoencoding) has a low ceiling.

The ease axis separates pairs whose frozen encoders already agree on neighborhoods from those that do not.
The ceiling axis separates contrastive vision-language paradigms (CLIP $0.140$, SigLIP $0.134$) from CLIP fine-tuned ($0.112$), self-distillation ($0.109$), discriminative ($0.100$), and reconstructive ($0.058$).
\Cref{fig:phase} shows the resulting phase diagram for both retrieval benchmarks; the quadrant structure is consistent across both.
Mean Flickr30k I2T R@1 separates the regimes cleanly: \emph{act now} ($n = 28$) reaches $0.46 \pm 0.09$, \emph{ceiling-limited} ($n = 8$) $0.34 \pm 0.05$, \emph{budget-limited} ($n = 10$) $0.30 \pm 0.06$, and \emph{incompatible} ($n = 26$) $0.21 \pm 0.11$, with the same ordering on COCO.
A Kruskal--Wallis test~\citep{kruskal1952use} rejects the null that the four regimes share the same R@1 distribution (Flickr30k: $H = 45.3$, $p < 10^{-9}$; COCO: $H = 51.5$, $p < 10^{-10}$).
\Cref{app:regime_boxplots} reports the full per-regime distribution.
Contrastive vision-language paradigms (CLIP, SigLIP, CLIP-FT) and the largest DINOv2 variants populate \emph{act now}, reaching up to $63\%$ Flickr R@1; smaller DINOv2 and ImageNet-21K models fall into \emph{ceiling-limited}; MAE (mean Flickr R@1 $0.11$, range $5$-$24\%$) along with the smallest ImageNet-21K and DINOv2 models occupy \emph{incompatible}. The unusually low-overlap cases associated with the SigLIP text tower are analyzed in \Cref{app:siglip_text}.

We next test whether this relationship is driven by coarse differences between model families or by details of the alignment setup. First, ease remains predictive when the vision-model family is held fixed: within each of the six vision families, ease correlates positively with R@1 across the text partners, with a median Spearman $\rho = 0.80$ (\Cref{app:within_family}), showing that its signal is not explained only by broad differences between pretraining paradigms. Second, the diagnostic tracks changes even within a fixed architecture: classification fine-tuning lowers the cross-modal ceiling at all three tested CLIP scales, with retrieval changing in the same direction (\Cref{app:training_stage}). Finally, the resulting pair ranking is not specific to CSA: on 24 representative pairs, \name remains predictive under trained Linear, MLP, and LoRA-style adapters, and outperforms global CKA in each case (\Cref{app:adapter_robustness}).

The training-budget experiments further clarify the meaning of ceiling. Across 18 alignment pairs, its correlation with downstream performance increases from $\rho=0.79$ after 10 epochs to $0.92$ after 300 epochs, as low-ceiling pairs saturate while higher-ceiling pairs continue toward stronger outcomes (\Cref{app:training_budget}). Ceiling therefore captures the level of performance that a transfer setting can ultimately support, rather than the marginal improvement obtained from increasing the training budget over an arbitrary interval.

\subsection{\name for task-arithmetic model merging}
\label{sec:practical_merge}

\looseness=-1
Combining the weights of multiple fine-tuned models is a common alternative to joint multi-task training~\citep{wortsman2022model,matena2022fisher,yadav2024ties,akiba2024evolutionary}.
We focus on the \emph{task arithmetic}~\citep{ilharco2023editing}: two checkpoints of the same base model, each fine-tuned on a different downstream task, are combined into a single model whose weights are the average of the two, with the goal of inheriting the capabilities of both.
The downstream metric is mean retention across the two original tasks after merging.
We fine-tune $14$ models (9 ViT-B/16, 4 ViT-L/14-16, 1 ViT-T/16) from $5$ pretraining paradigms on $8$ tasks from the standard benchmark~\citep{ilharco2023editing} and merge all within-paradigm pairs (392 model merges in total).
Details and the full task-arithmetic multi-task extension are in \Cref{app:merge_extras}.

For task-arithmetic merging, \emph{ease} is the pre-merge $k$-NN overlap between the two fine-tuned checkpoints, while \emph{ceiling}  measures specialization of their pretrained base model relative to the reference vision-model library (\Cref{app:score_computation}).

\begin{wrapfigure}[19]{r}{0.5\linewidth}
\vspace{-1.5em}
\centering
\includegraphics[width=\linewidth]{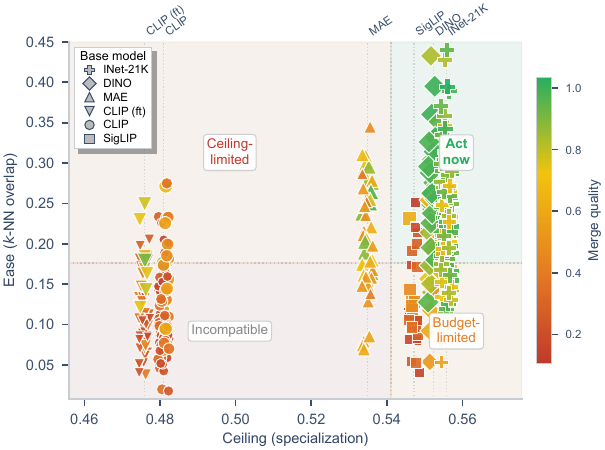}
\caption{
Mergeability phase diagram ($n = 392$ pairs).
$x$-axis: paradigm \emph{specialization}, defined as $1 - $ mean across-paradigm $k$-NN overlap from the convergence experiments (ceiling).
$y$-axis: pair-specific pre-merge $k$-NN overlap (ease).
Color and size encode merge quality.
}
\label{fig:merge_phase}
\vspace{-1.5em}
\end{wrapfigure}

If the two checkpoints already agree on neighborhood structure, weight averaging is unlikely to interpolate destructively, consistent with the linear mode connectivity literature~\citep{garipov2018loss,frankle2020linear}; whereas if they disagree, averaging interpolates into a region that neither checkpoint occupied. Specialized paradigms, such as those whose features differ most from other paradigms, produce smaller, more consistent task vectors under fine-tuning, so weight averaging drifts less.

The four regimes mirror those in \Cref{sec:practical_align}, applied to fine-tuned checkpoints rather than pretrained encoders.
Discriminative, self-distillation, and reconstructive paradigms populate the \emph{act now} regime; the \emph{ceiling-limited} and \emph{incompatible} regimes concentrate on contrastive vision-language paradigms, whose features change substantially under classification fine-tuning.
Mean retention again separates the regimes cleanly (Kruskal--Wallis $H = 198.2$, $p < 10^{-42}$): \emph{act now} ($n = 129$) at $0.84 \pm 0.16$, \emph{budget-limited} ($n = 67$) at $0.57 \pm 0.26$, \emph{ceiling-limited} ($n = 67$) at $0.56 \pm 0.23$, and \emph{incompatible} ($n = 129$) at $0.35 \pm 0.17$ (\Cref{app:regime_boxplots}).
Budget-limited and ceiling-limited pairs reach almost the same mean retention ($0.57$ vs.\ $0.56$), yet they call for different actions: more optimization or a stronger partner for the first, a different paradigm for the second. The regime label carries this distinction, the score alone does not.

%% file: Sections/Discussion.tex
In this work, we show that model-pair compatibility can be estimated before running an expensive transfer procedure. 
A small probe set is enough to measure whether two models agree on local neighborhoods, and this local agreement provides a strong signal of downstream transfer success.

The key idea is to separate compatibility into two parts. \emph{Ease} measures whether a specific pair already starts close: it is high when the two encoders assign similar neighbors to the same examples before alignment or merging.
\emph{Ceiling} measures whether the corresponding training paradigms are compatible with the application: it is high when the paradigms leave enough room for strong agreement to emerge.
The combination of scores yields a simple phase diagram that distinguishes pairs worth using immediately from pairs that are budget-limited, ceiling-limited, or unlikely to succeed.

\looseness=-1
This two-axis view explains why the same model family can behave differently across applications. 
Contrastive vision-language models are favorable for cross-modal alignment, where their pretraining objective is close to the downstream target, but unfavorable for task-arithmetic merging across classification tasks. 
Discriminative and self-distillation models show the reverse pattern. 
Thus, the relevant question is not whether a paradigm is universally good, but whether its objective is compatible with the transfer operation at hand.

These conclusions are robust to both the composition of the model library and the precise estimation of the ceiling. Recomputing \name while leaving out individual families or paradigms preserves correlations of $0.85$--$0.93$ for alignment and $0.60$-$0.72$ for merging on the remaining candidate pools (\Cref{app:heldout}). Likewise, replacing pair-specific ceiling estimates with a single shared value per paradigm, or perturbing them with moderate noise, leaves the ranking largely unchanged, whereas shuffling the paradigm ordering removes this benefit (\Cref{app:ceiling_robustness}). The four regimes in our phase diagram should therefore be viewed as an interpretable summary rather than strict categories; for finer-grained comparisons, the continuous \emph{ease $\times$ ceiling} score is more informative. We discuss further limitations of \name in \Cref{app:limitations}.

\name can help practitioners spend their compute budget where it is most likely to pay off. By ranking candidate pairs using only forward passes on a small probe set, it reduces the need to align or merge every possible combination. Its two-axis diagnostic also indicates whether to invest further in a pair or reconsider the pretraining paradigm. As pretrained model libraries grow, such inexpensive compatibility checks can make model selection more systematic and guide costly experiments toward the most promising combinations.

%% file: Sections/Appendix_v2.tex
\section{Limitations}
\label{app:limitations}

\name might generalize to other domains involving interactions between two learned representations or models, but we leave such extensions to future work.

A second limitation concerns the definition of the ceiling proxy. In our experiments, we instantiate this score separately for the cross-modal alignment and for the task-arithmetic merging. More generally, the ceiling proxy depends on the application and requires domain-specific knowledge; it cannot be expressed as a single universal formula.

Our complete 72-pair cross-modal sweep uses CSA. To test whether the observed compatibility ranking is specific to this alignment procedure, we additionally evaluate trained Linear, MLP, and LoRA-style adapters on 24 representative pairs spanning all vision paradigms and find consistent pair rankings across alignment methods (\Cref{app:adapter_robustness}). 

Our dynamical assumptions from \Cref{sec:overlap} are sufficient conditions, not necessary ones, and may not hold for all training regimes. A2 is strongest when the training dynamics preserve the initial ordering of inter-encoder gaps and drive them down through a finite-time decay function $\lambda(t)$. For models with very different optimizers, data distributions, or objectives, this decay may be slow, yielding a loose bound. \Cref{app:noisy_contraction} relaxes A2 to allow an additive noise floor that captures realistic SGD noise.

Finally, our phase diagram should be interpreted as a finite-budget diagnostic rather than an impossibility result. Pairs with low ease and low ceiling are predicted to be poor candidates under a fixed alignment or merging recipe and realistic compute and data budgets. However, with substantially more alignment data, larger adapters, longer training, or a different optimization objective, even pairs that are initially classified as \emph{incompatible} may eventually achieve strong downstream performance. Thus, the proposed regimes describe the expected efficiency and practicality of transfer, not a fundamental impossibility claim.

\section{Experimental setup}
\label{app:setup}

\subsection{Model libraries and measurement protocol}
\label{app:setup_library}

We use three model libraries across the experiments. The \emph{convergence library} (\Cref{app:setup_library_convergence}) supports the unimodal init-to-final panels of \Cref{fig:init_structure}. The \emph{alignment library} (\Cref{app:setup_library_alignment}) supports the cross-modal experiments of \Cref{sec:practical_align}. The \emph{merging library} (\Cref{app:setup_library_merging}) supports the merging experiments of \Cref{sec:practical_merge}.

Model features are computed on a $1{,}024$-pair probe from WIT~\citep{srinivasan2021wit}, following \citet{huh2024platonic}, for both the convergence and alignment experiment.
For the merge experiment, we extract features from a $2048$-sample probe pooled from one of the eight fine-tuning tasks for merging, here MNIST~\citep{lecunmnist} (\Cref{app:probe_sensitivity} ablates the choice).
We extract CLS-token activations on the vision side and mean-pooled hidden states on the language side from all layers, apply outlier-robust preprocessing (quantile clipping at $q=0.95$), and select the layer combination that maximizes the calibrated overlap per model pair, following the layer-selection protocol of \citet{huh2024platonic}.
SigLIP variants in \texttt{timm} are trained without a class token and use an attention-pool head, so for these we substitute mean pooling over patch tokens, which produces alignment scores consistent with CLIP across all comparisons.
We evaluate mutual $k$-NN overlap as a local metric and linear CKA as a global metric, both calibrated with the permutation procedure of \citet{groeger2026aristotelian}, which controls for chance overlap by comparing the measured similarity against a null distribution from $K = 200$ random permutations of the embedding rows at significance level $\alpha = 0.05$.

\subsubsection{Convergence library}
\label{app:setup_library_convergence}

The convergence library contains 20 vision models and 17 language models.
The 20 vision models~\citep{wightman2019timm} span six paradigms: ImageNet-21K classification~\citep{ridnik2021imagenet21k} with ViT-T/S/B/L~\citep{dosovitskiy2021image}, masked autoencoders MAE-B/L/H~\citep{he2022masked}, DINOv2 self-distillation models at four scales (S/B/L/G,~\citealp{oquab2024dinov2}), CLIP-B/L/H~\citep{radford2021learning}, CLIP-B/L/H fine-tuned on ImageNet-12K, and SigLIP-B/L/SO400M~\citep{zhai2023sigmoid}.
The 17 language models span seven families: BLOOM at 560M to 7.1B parameters~\citep{lescao2023bloom}, OpenLLaMA at 3B to 13B parameters~\citep{geng2023openllama}, the LLaMA family at 7B and 13B~\citep{touvron2023llama} together with Llama-3-8B~\citep{grattafiori2024llama3}, OLMo at 1B and 7B parameters~\citep{groeneveld2024olmo}, Gemma at 2B and 7B parameters~\citep{gemmateam2024gemma}, Mistral at 7B parameters~\citep{jiang2023mistral}, and the encoder model RoBERTa-large~\citep{liu2019roberta}.
The language panel spans heterogeneous architectural choices (ALiBi vs.\ RoPE positional encoding, multi-query vs.\ grouped-query vs.\ multi-head attention, sliding-window attention, RMSNorm vs.\ LayerNorm, GeGLU vs.\ SwiGLU activations), so the within- and across-family comparisons are not driven by a single dominant design.
For the initialization experiments we create three random variants per model with different seeds.

\subsubsection{Alignment library}
\label{app:setup_library_alignment}

The cross-modal alignment experiments use 19 vision models and 10 text encoders, evaluated on 72 vision-language pairs.
The vision side reuses the convergence library minus DINOv2-Giant, which is excluded because aligning a 1.1B-parameter encoder is too expensive for the per-pair adapter sweep.
The 10 text encoders span seven families: OLMo-1B/7B~\citep{groeneveld2024olmo}, BLOOMz-560M/3B~\citep{lescao2023bloom}, LLaMA-7B~\citep{touvron2023llama}, Mistral-7B~\citep{jiang2023mistral}, RoBERTa-large~\citep{liu2019roberta}, and the text towers of CLIP-B/L~\citep{radford2021learning} and SigLIP-B~\citep{zhai2023sigmoid}.

The 72 pairs sample the $19 \times 10 = 190$ grid in a matchmaking design.
RoBERTa-large is paired with all 19 vision models, anchoring every vision scale and paradigm at one common text encoder.
The four base-size representatives of the four widest vision paradigms (ViT-21K-B, MAE-B, CLIP-B, SigLIP-B) are paired with all 10 text encoders, fully resolving the cross-paradigm overlap matrix at base scale.
Two further base-size representatives (DINOv2-B and CLIP-FT-B) are paired with 8 text encoders, omitting the LLaMA-7B and Mistral-7B columns.
DINOv2-L is paired with the four short-context text encoders only (RoBERTa-large and the three vision-language text towers).
Pairs in the intersection are counted once, summing to 72.

\subsubsection{Merging library}
\label{app:setup_library_merging}

The task-arithmetic merging experiments use a separate library of 14 ViT backbones loaded from public \texttt{timm} checkpoints~\citep{wightman2019timm}, chosen to match the standard task-arithmetic benchmark of \citet{ilharco2023editing}.
The 14 backbones span five paradigms: contrastive vision-language (CLIP-B-LAION, CLIP-B-OpenAI, SigLIP-B, CLIP-L-LAION), contrastive vision-language fine-tuned on classification (CLIP-FT-B, CLIP-FT-L), self-distillation (DINO-B~\citep{caron2021emerging} and DINOv3-B~\citep{simeoni2025dinov3}), reconstructive (MAE-B, MAE-L), and discriminative (ViT-T, two ViT-B variants from the \texttt{augreg} and \texttt{orig} recipes, and ViT-L, all on ImageNet-21K~\citep{ridnik2021imagenet21k}).
The self-distillation paradigm is represented by DINO and DINOv3 instead of DINOv2; the substitution is operational, since publicly available DINOv2 \texttt{timm} base checkpoints use patch14 while the rest of the merging base library is patch16.

\subsubsection{Operational computation of ease and ceiling}
\label{app:score_computation}

\paragraph{Cross-modal alignment.}
Let $O(a,b)$ denote the mutual $k$-NN overlap between encoders $a$ and $b$ on the shared probe set.
For a vision encoder $b$ and text encoder $v$,
\[
E_{\mathrm{align}}(b,v)=O(b,v).
\]
Let $\mathcal{L}_{\mathrm{text}}$ be the reference library of text encoders. We define
\[
C_{\mathrm{align}}(b)
=
\frac{1}{|\mathcal{L}_{\mathrm{text}}|}
\sum_{v'\in\mathcal{L}_{\mathrm{text}}}
O(b,v').
\]
Thus, all candidate pairs containing the same vision encoder receive the same ceiling, while ease remains pair-specific.

\paragraph{Task-arithmetic merging.}
For two task-specific checkpoints $b_{t_1}$ and $b_{t_2}$ derived from the same pretrained base model $b$,
\[
E_{\mathrm{merge}}(b_{t_1},b_{t_2})
=
O(b_{t_1},b_{t_2}).
\]
Let $\mathcal{L}_{\mathrm{vision}}$ be the reference library of pretrained vision models. We define
\[
C_{\mathrm{merge}}(b)
=
1-
\frac{1}{|\mathcal{L}_{\mathrm{vision}}|-1}
\sum_{\substack{v\in\mathcal{L}_{\mathrm{vision}}\\v\neq b}}
O(b,v).
\]
Thus, all merges derived from the same pretrained base model receive the same ceiling, while ease depends on the two fine-tuned checkpoints.

\subsection{Cross-modal alignment protocol}
\label{app:setup_alignment}

We align each vision-language pair with CSA~\citep{li2025csa}, a closed-form adapter based on normalized canonical correlation analysis. Both encoders are kept frozen. The adapter consists of two linear projections that map the vision and language features into a shared $512$-dimensional canonical subspace, and is fitted in closed form by canonical correlation on paired image-caption embeddings. Since the solution is closed form, there is no iterative optimization and no learning rate, batch size, or epoch budget to set. We use a canonical dimension of $512$, no equal-weight constraint between the two projections, and no regularization on the canonical solution.

For each pair, we select a single combination of one vision layer and one language layer. Following the layer-selection protocol of \citet{groger2025limited}, we sample a probe of $5{,}000$ paired image-caption embeddings from the MS-COCO 2014 training split~\citep{lin2014microsoft} and choose the layer combination with the highest mutual $k$-NN overlap at $k = 10$. The canonical correlation is then fitted on paired image-caption embeddings from the MS-COCO 2014 training split at the selected layer combination. No augmentation is applied to the embeddings.

We report image-to-text retrieval Recall@$1$ on the MS-COCO 2014 validation split and on the Flickr30k test split~\citep{plummer2015flickr30k}. Recall@$5$, Recall@$10$, and the text-to-image direction are also computed but are not reported in the main paper. Retrieval uses cosine similarity between $\ell_2$-normalized aligned features with an evaluation batch size of $256$. Each pair is run once with a random seed of $42$. The closed-form solution is deterministic given its input features, so no additional seeds are required.

\subsection{Task-arithmetic merging protocol}
\label{app:setup_merging}

Following the standard task-arithmetic benchmark of \citet{ilharco2023editing}, we use the eight image-classification tasks: MNIST, SVHN, GTSRB, DTD, EuroSAT, RESISC45, Stanford Cars, and SUN397. The 14 pretrained backbones are the merging library of \Cref{app:setup_library_merging}. For every base model and every task we fine-tune one checkpoint, producing $14 \times 8 = 112$ task-specialized models in total.

Fine-tuning follows the standard end-to-end recipe in which all backbone parameters are updated together with the classifier head, with no frozen layers. We use AdamW~\citep{loshchilov2019decoupled} with weight decay $0.01$, the cross-entropy loss, and a cosine annealing schedule across the full training run~\citep{loshchilov2017sgdr}. The backbone learning rate is $10^{-4}$. The classifier head is initialized at random, trained jointly with the backbone, and uses a learning rate of $10^{-3}$ (ten times the backbone rate). Each model is trained for $20$ epochs with batch size $32$, and the final-epoch checkpoint is used for all subsequent merging and evaluation. Training images are augmented with a random resized crop of size $224$ at scale $0.6$ to $1.0$, followed by a random horizontal flip and ImageNet normalization. Evaluation images are resized to $256$ pixels on the short side, center-cropped to $224$, and ImageNet-normalized. The random seed is fixed at $42$ across all fine-tuning runs.

Merging follows the standard task-arithmetic recipe of \citet{ilharco2023editing}, which averages fine-tuned weights in the spirit of model soups~\citep{wortsman2022model}. For pairwise merging, two fine-tuned checkpoints of the same backbone are combined by averaging their backbone weights, $\theta_{\text{merged}} = 0.5 \, \theta_a + 0.5 \, \theta_b$. The classifier head from each task is restored at evaluation time, so that the merged backbone is paired with the original task-specific head when scoring that task. For the multi-task task-arithmetic extension reported in \Cref{app:merge_extras}, the merged backbone is $\theta_{\text{base}} + \alpha \sum_i (\theta_i - \theta_{\text{base}})$ across all eight tasks, with $\alpha \in \{0.3, 0.5\}$, also following \citet{ilharco2023editing}.

For each task we report top-$1$ accuracy on the standard test split. The merge quality of a task pair is the mean retention $\tfrac{1}{2}(\text{acc}_a / \text{acc}^{\text{solo}}_a + \text{acc}_b / \text{acc}^{\text{solo}}_b)$, where $\text{acc}^{\text{solo}}$ is the accuracy of the corresponding fine-tuned baseline before merging. We evaluate every within-model task pair across all $14$ backbones, giving $14 \times \binom{8}{2} = 392$ pairwise merges.

\subsection{Hyperparameters}
\label{app:setup_hyperparameters}

\Cref{tab:csa_hp} summarizes the hyperparameters of the cross-modal alignment pipeline and \Cref{tab:merge_hp} summarizes those of the task-arithmetic fine-tuning and merging pipeline. Together they are sufficient to reproduce both pipelines without reading the source code.

\begin{table}[H]
\centering
\small
\caption{Cross-modal alignment hyperparameters.}
\label{tab:csa_hp}
\begin{tabular}{ll}
\toprule
Component & Value \\
\midrule
Adapter & Normalized canonical correlation analysis (CSA, \citealp{li2025csa}) \\
Encoders & Frozen \\
Canonical dimension & $512$ \\
Equal-weight constraint & No \\
Regularization & None \\
Layer-selection probe size & $5{,}000$ paired image-caption embeddings \\
Layer-selection metric & Mutual $k$-NN overlap, $k = 10$ \\
Training data & MS-COCO 2014 captions (training split) \\
Embedding augmentation & None \\
Evaluation batch size & $256$ \\
Random seed & $42$ \\
\bottomrule
\end{tabular}
\end{table}

\begin{table}[H]
\centering
\small
\caption{Task-arithmetic fine-tuning and merging hyperparameters.}
\label{tab:merge_hp}
\begin{tabular}{ll}
\toprule
Component & Value \\
\midrule
Pretrained source & Public \texttt{timm} checkpoints~\citep{wightman2019timm} \\
Fine-tuning method & Full fine-tuning \\
Optimizer & AdamW~\citep{loshchilov2019decoupled} \\
Weight decay & $0.01$ \\
Backbone learning rate & $10^{-4}$ \\
Classifier-head learning rate & $10^{-3}$ \\
Scheduler & Cosine annealing \\
Loss & Cross-entropy \\
Epochs & $20$ \\
Batch size & $32$ \\
Train augmentation & Random resized crop, random horizontal flip \\
Eval augmentation & Resize to $256$ on the short side, center crop to $224$ \\
Checkpoint selection & Final epoch \\
Pairwise merge coefficient & $\alpha = 0.5$ \\
Multi-task merge coefficients & $\alpha \in \{0.3, 0.5\}$ \\
Random seed & $42$ \\
\bottomrule
\end{tabular}
\end{table}

\subsection{Compute resources}
\label{app:setup_compute}

All experiments run on a cluster of 8 NVIDIA GeForce RTX 3090 GPUs (24~GB each), with each individual training run scheduled to a single GPU. Per-run memory is below 4~GB, and per-run wall time is at most 30 minutes. The total compute budget across all reported experiments is approximately 100 GPU-hours, including the 72 cross-modal alignment runs, the 112 fine-tuning runs that produce the merging checkpoints, and the forward passes used to compute $k$-NN overlap and CKA across the model library.

\subsection{Datasets and licenses}
\label{app:setup_licenses}

\Cref{tab:datasets_licenses} lists the datasets used in the paper together with their licenses. All datasets are publicly available and used in accordance with their respective terms.

\begin{table}[H]
\centering
\small
\caption{Datasets used in this paper with their licenses.}
\label{tab:datasets_licenses}
\begin{tabular}{p{0.25\linewidth} p{0.32\linewidth} p{0.28\linewidth}}
\toprule
Dataset & Used for & License \\
\midrule
WIT-1024 \citep{srinivasan2021wit} & probe set, $k$-NN overlap & CC BY-SA 3.0 \\
MS-COCO \citep{lin2014microsoft} & alignment training, retrieval & CC BY 4.0 \\
Flickr30k \citep{plummer2015flickr30k} & retrieval evaluation & Flickr terms of use \\
\midrule
\multicolumn{3}{l}{\textit{Task-arithmetic merging benchmark~\citep{ilharco2023editing}}} \\
MNIST & merging fine-tuning & CC BY-SA 3.0 \\
SVHN & merging fine-tuning & non-commercial research use \\
GTSRB & merging fine-tuning & CC0 1.0 \\
DTD & merging fine-tuning & research use \\
EuroSAT & merging fine-tuning & MIT \\
RESISC45 & merging fine-tuning & non-commercial research use \\
Stanford Cars & merging fine-tuning & research use \\
SUN397 & merging fine-tuning & research use \\
\bottomrule
\end{tabular}
\end{table}

\subsection{Code availability}
\label{app:setup_code}

The full code base for the convergence, alignment, and merging experiments, together with the configuration files and scripts used to produce every figure in the paper, will be released upon acceptance.

\vspace{-1em}
\section{Broader impact}
\label{app:broader_impact}

\name is a measurement framework for selecting model pairs before alignment or merging. It introduces no new pretrained models, releases no scraped data, and does not change the capabilities of any existing system. Its intended use is to reduce the wasted compute spent on transferring incompatible model pairs, which we view as a net positive for accessibility and energy use. We do not foresee direct negative societal impacts beyond those already associated with the underlying pretrained models, which are released by their original authors under their stated licenses.

\section{Why local: empirical evidence}
\label{app:hierarchy}

This experiment motivates the choice of local $k$-NN overlap rather than global CKA as the cheap-to-compute proxy used in \Cref{sec:practical}. Local neighborhood recovery is a strictly weaker requirement than global distance recovery, so local metrics can track alignment trends in regimes where global metrics flatten or diverge. For each fixed reference model, we correlate \emph{calibrated} alignment with the partner model's scale (parameter count).

\begin{figure}[H]
\centering
\includegraphics[width=\linewidth]{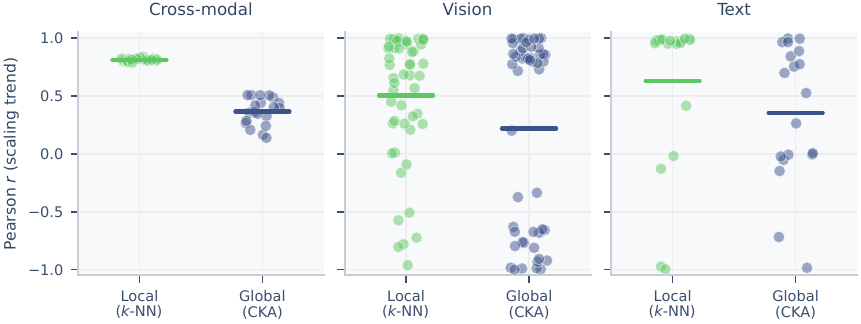}
\caption{Pearson $r$ between calibrated alignment and model scale ($k=10$), for each fixed reference model. Left: for cross-modal, local $k$-NN (green) shows positive scaling (mean $r \approx 0.64$), while global CKA (blue) is weaker after calibration (mean $r \approx 0.37$). Center: within-vision. Right: within-text. The local-global gap motivates measuring local $k$-NN overlap rather than global CKA.}
\label{fig:hierarchy}
\end{figure}

\Cref{fig:hierarchy} shows the local-global gap. Cross-modal local $k$-NN correlations concentrate near $r \approx 0.64$, while global CKA correlations are weaker and more dispersed (mean $r \approx 0.37$), so local alignment captures a stronger scaling trend than global alignment.

\section{Distribution identities for local overlap}
\label{app:distribution_identities}

To simplify notation, throughout this section we suppress the time index $t$ and write all neighborhood distributions, overlaps, and $k$-NN sets without an explicit time dependence.
\subsection{Overlap-TV identities}

The bound in \Cref{thm:local_convergence} relies on two exact identities between $k$-NN overlap and the total variation distance of the corresponding neighborhood distributions.

\begin{proposition}[Overlap-TV equivalence]
\label{prop:tv_overlap}
For the hard $k$-NN distributions defined in~\Cref{eq:hard_distribution},
\begin{equation}
\TV\bigl(p_X(\cdot \mid i),\, p_Y(\cdot \mid i)\bigr) = 1 - o_i.
\end{equation}
\end{proposition}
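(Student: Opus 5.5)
The plan is to compute the total variation distance directly from its definition. I will use the $\ell_1$ form
\[
\TV(p,q)=\tfrac12\sum_{j}|p(j)-q(j)|,
\]
taken over the index set $V\setminus\{i\}$. Write $A:=\kNN_X(i)$ and $B:=\kNN_Y(i)$. Both are subsets of $V\setminus\{i\}$ with $|A|=|B|=k$, because ties are broken by a fixed deterministic rule. By \Cref{eq:hard_distribution}, $p_X(\cdot\mid i)=\tfrac1k\mathbbm{1}_A$ and $p_Y(\cdot\mid i)=\tfrac1k\mathbbm{1}_B$.

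I will partition the index set into four regions and evaluate $|p_X(j\mid i)-p_Y(j\mid i)|$ on each:
\begin{itemize}
\item on $A\cap B$ it is $0$;
\item on $A\setminus B$ and on $B\setminus A$ it is $1/k$;
\item outside $A\cup B$ it is $0$.
\end{itemize}
This gives
\[
\TV=\tfrac{1}{2k}\bigl(|A\setminus B|+|B\setminus A|\bigr).
\]

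Next I use $|A\setminus B|=|A|-|A\cap B|=k-|A\cap B|$, and the same identity for $B$. Then
\[
\TV=\tfrac{1}{2k}\cdot 2\bigl(k-|A\cap B|\bigr)=1-\tfrac{|A\cap B|}{k}=1-o_i,
\]
where the last step is the definition of $o_i$ in \Cref{eq:overlap}.

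As an alternative route, I could use $\TV(p,q)=1-\sum_j\min(p(j),q(j))$. The minimum equals $1/k$ exactly on $A\cap B$ and $0$ elsewhere, so the result is immediate.

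There is no real obstacle here. The only point that needs care is that both neighborhoods have size exactly $k$; the deterministic tie-breaking guarantees this, and it is what makes the two set differences equal in size.

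The smoothed distributions will be needed later for \Cref{thm:local_convergence}. For them I note the companion identity
\[
q_X-q_Y=(1-\varepsilon)(p_X-p_Y),
\]
since the $\nu$ terms cancel. This gives $\TV(q_X,q_Y)=(1-\varepsilon)(1-o_i)$, which matches the expression for $\delta_i^{(0)}$ used earlier.
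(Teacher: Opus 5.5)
Your proof is correct and is essentially the paper's argument. The paper takes exactly your ``alternative route'': it uses $\TV(p,q)=1-\sum_j\min\{p(j),q(j)\}$ and observes that the minimum equals $1/k$ precisely on $\kNN_X(i)\cap\kNN_Y(i)$, while your primary $\ell_1$ count via $|A\setminus B|=|B\setminus A|=k-|A\cap B|$ is an equivalent direct computation, and your closing remark on the smoothed case matches the paper's proof of \Cref{prop:smoothed_tv}.
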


\begin{proposition}[Smoothed hard-distribution identity]
\label{prop:smoothed_tv}
For each sample $i$ and every $\varepsilon \in (0,1)$,
\begin{equation}
\TV\bigl(q_X(\cdot \mid i),\, q_Y(\cdot \mid i)\bigr)
= (1-\varepsilon)\,(1 - o_i).
\end{equation}
\end{proposition}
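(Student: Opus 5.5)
The identity follows from affinity of total variation under common mixing, combined with \Cref{prop:tv_overlap}. The approach is to write out the two smoothed distributions and observe that the shared uniform component cancels in the pointwise difference.

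\textbf{Step 1: the uniform part cancels.} By \Cref{eq:soft_distribution}, both smoothed distributions use the same mixing weight $\varepsilon$ and the same base measure $\nu(\cdot \mid i)$. Hence for every $j \in V$,
\[
q_X(j \mid i) - q_Y(j \mid i) = (1-\varepsilon)\bigl(p_X(j \mid i) - p_Y(j \mid i)\bigr).
\]
Using $\TV(\mu,\mu') = \tfrac12 \sum_j |\mu(j) - \mu'(j)|$ and pulling out the nonnegative factor $1-\varepsilon$ gives
\[
\TV\bigl(q_X(\cdot \mid i), q_Y(\cdot \mid i)\bigr) = (1-\varepsilon)\, \TV\bigl(p_X(\cdot \mid i), p_Y(\cdot \mid i)\bigr).
\]

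\textbf{Step 2: apply \Cref{prop:tv_overlap}.} Substituting $\TV(p_X(\cdot \mid i), p_Y(\cdot \mid i)) = 1 - o_i$ yields the claim.

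\textbf{Checking \Cref{prop:tv_overlap}.} If that proposition must be verified rather than cited, the computation is short. Write $A = \kNN_X(i)$ and $B = \kNN_Y(i)$, so $|A| = |B| = k$. The difference $|p_X(j \mid i) - p_Y(j \mid i)|$ is $1/k$ exactly on the symmetric difference $A \triangle B$ and $0$ elsewhere. Since $|A \triangle B| = 2(k - |A \cap B|)$, the total variation is
\[
\tfrac12 \cdot \frac{2(k - |A \cap B|)}{k} = 1 - o_i .
\]

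\textbf{Main obstacle.} There is essentially none. The only points requiring care are that both models use the same $\varepsilon$ and the same $\nu$, which is what makes the cancellation in Step 1 exact, and that both $k$-NN sets have exactly $k$ elements, which is guaranteed by the fixed tie-breaking rule.
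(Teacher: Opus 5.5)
Your proof is correct and follows the paper's own argument: the shared $\varepsilon\,\nu(\cdot \mid i)$ term cancels in the pointwise difference, the factor $1-\varepsilon$ comes out of the $\ell_1$ sum, and \Cref{prop:tv_overlap} completes the identity. Your optional check of \Cref{prop:tv_overlap} uses $|A \triangle B| = 2(k - |A \cap B|)$ instead of the paper's $1 - \sum_j \min\{p(j), q(j)\}$ identity, which is an equally valid way to verify that step.
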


\subsection{Proof of \Cref{prop:tv_overlap}}

Fix a sample $i$. By definition, the hard $k$-NN distribution $p_X(\cdot \mid i)$ assigns mass $1/k$ to each element of $\kNN_X(i)$ and mass $0$ to every other index. The same is true for $p_Y(\cdot \mid i)$ with $\kNN_Y(i)$ in place of $\kNN_X(i)$. We use the identity
\begin{equation}
\TV(p,q) = 1 - \sum_j \min\{p(j),q(j)\}
\end{equation}
for probability measures on a finite space. Therefore
\begin{equation}
\TV\bigl(p_X(\cdot \mid i),p_Y(\cdot \mid i)\bigr)
= 1 - \sum_j \min\bigl\{p_X(j \mid i),p_Y(j \mid i)\bigr\}.
\end{equation}
If $j \in \kNN_X(i) \cap \kNN_Y(i)$, both distributions assign mass $1/k$ to $j$, so the minimum is $1/k$. Otherwise at least one of the two distributions assigns mass $0$ to $j$, so the minimum is $0$. The sum therefore contains exactly $|\kNN_X(i) \cap \kNN_Y(i)|$ nonzero terms, each equal to $1/k$, giving
\begin{equation}
\sum_j \min\bigl\{p_X(j \mid i),p_Y(j \mid i)\bigr\}
= \frac{|\kNN_X(i) \cap \kNN_Y(i)|}{k}
= o_i,
\end{equation}
and hence $\TV(p_X,p_Y) = 1 - o_i$. \qed

\subsection{Proof of \Cref{prop:smoothed_tv}}

Fix a sample $i$, and let $\nu(j \mid i) := \frac{1}{N-1}\mathbbm{1}\{j \neq i\}$ denote the uniform distribution over $V \setminus \{i\}$. By definition,
\begin{equation}
q_X(\cdot \mid i)
= (1-\varepsilon)p_X(\cdot \mid i) + \varepsilon\,\nu(\cdot \mid i),
\qquad
q_Y(\cdot \mid i)
= (1-\varepsilon)p_Y(\cdot \mid i) + \varepsilon\,\nu(\cdot \mid i).
\end{equation}
Subtracting,
\begin{equation}
q_X(\cdot \mid i) - q_Y(\cdot \mid i)
= (1-\varepsilon)\bigl(p_X(\cdot \mid i) - p_Y(\cdot \mid i)\bigr).
\end{equation}
Using $\TV(p,q) = \tfrac{1}{2}\sum_j |p(j)-q(j)|$,
\begin{align}
\TV\bigl(q_X(\cdot \mid i), q_Y(\cdot \mid i)\bigr)
&= \tfrac{1}{2}\sum_j (1-\varepsilon)\,\bigl|p_X(j \mid i)-p_Y(j \mid i)\bigr| \\
&= (1-\varepsilon)\,\TV\bigl(p_X(\cdot \mid i), p_Y(\cdot \mid i)\bigr).
\end{align}
\Cref{prop:tv_overlap} identifies the last total variation as $1-o_i$, so $\TV(q_X, q_Y) = (1-\varepsilon)(1-o_i)$. \qed

\subsection{Why we $\varepsilon$-smooth hard distributions}
\label{app:soft}

\Cref{thm:local_convergence} uses KL divergences, which require full-support distributions. The $\varepsilon$-smoothing in \Cref{eq:soft_distribution} mixes the hard distribution with the uniform background and yields a full-support conditional, while \Cref{prop:smoothed_tv} preserves the hard-overlap semantics exactly up to the factor $(1-\varepsilon)$. The smoothing parameter $\varepsilon$ is a fixed analytical regularizer required only to make KL divergences finite. In practice we take $\varepsilon$ small (\textit{e.g.}, $\varepsilon = 1/N$), which keeps the prefactor $1/(1-\varepsilon)$ in \Cref{thm:local_convergence} close to $1$ at the cost of larger KL terms when supports of the learned and target distributions disagree. A natural alternative is to replace the hard distribution with a distance-based soft distribution (\textit{e.g.}, a Gaussian over embedding distances). We avoid this choice because such distributions implicitly assume that local metric distances are meaningful objects of comparison across views. The empirical object of interest, hard $k$-NN overlap, depends only on neighbor identity and not on the distances to those neighbors. The $\varepsilon$-smoothing preserves this property by adding uniform background mass to all non-self indices regardless of their distances, while providing the full support required for KL divergences to be finite.

\section{Pair-specific ease: per-sample start-condition overlap}
\label{app:ease}
\subsection{Dynamical assumptions and proof variables}
\label{app:assumptions}

The finite-time convergence results in \Cref{sec:overlap} rest on three explicit assumptions. Let $(\mathcal F_t)_{t \ge 0}$ be the filtration generated by the training randomness, and let $q_W^{(t)}(\cdot \mid i)$ denote the smoothed distribution of encoder $W \in \{X, Y\}$ after $t$ training steps. We define two per-step total variation quantities:
\begin{equation}
\label{eq:proof_variables}
m_i^{(t)} := \min_{W \in \{X, Y\}} \TV\bigl(q_W^{(t)}(\cdot \mid i),\, q_W^{\star}(\cdot \mid i)\bigr), \qquad \delta_i^{(t)} := \TV\bigl(q_X^{(t)}(\cdot \mid i),\, q_Y^{(t)}(\cdot \mid i)\bigr),
\end{equation}
where $m_i^{(t)}$ is the smaller of the two encoder-to-target distances and $\delta_i^{(t)}$ is the inter-encoder gap. \Cref{prop:smoothed_tv} gives the identity
\begin{equation}
\label{eq:delta0_overlap}
\delta_i^{(0)} = (1-\varepsilon)\bigl(1 - o_i^{(0)}\bigr) \quad \text{at } t=0,
\end{equation}
so the start-condition overlap and the initial inter-encoder gap describe the same starting point.

\paragraph{(A1) Conditional reachability.}
There exist $T > 0$, $\xi > 0$, and $p_T > 0$ such that for every $\delta_i^{(0)}$ in the support of the start condition, $\mathbb{P}\bigl(m_i^{(T)} \le \xi \mid \delta_i^{(0)}\bigr) \ge p_T$. This is a finite-time reachability condition: it does not require that both encoders reach the target, only that one does. The assumption is satisfied whenever the training procedure has a nonzero probability of making sufficient progress within $T$ steps, a condition that holds for stochastic gradient methods on objectives with bounded noise, provided the target is reachable from the initialization.

\paragraph{(A2) Finite-time decay.}
There exists a nonincreasing function $\lambda:\mathbb{N}_0\to[0,\infty)$ with $\lambda(t)\to 0$ as $t\to\infty$ such that, for every $t \ge 0$,
\begin{equation}
\mathbb{E}\bigl[\delta_i^{(t)} \mid \delta_i^{(0)}\bigr]
\le \lambda(t)\,\delta_i^{(0)} .
\end{equation}
This is a stability condition on the training dynamics: the expected inter-encoder gap after $t$ steps is controlled by its initial value. The assumption is problem-dependent and strongest when both encoders share training dynamics (\textit{e.g.}, the same objective, same data, similar architectures). A2 controls discrepancy, not convergence: it bounds how far apart the two encoders can be at time $t$, not whether either has reached the target. A relaxation allowing an additive noise floor is developed in \Cref{app:noisy_contraction}.

\paragraph{(A3) Basin condition.}
There exists $\eta > 0$ such that $\bigl\{m_i^{(T)} \le \xi,\ \delta_i^{(T)} \le \eta\bigr\} \subseteq \cC_i$, where $\cC_i := \{\lim_{t \to \infty} \max_W \TV(q_W^{(t)}, q_W^{\star}) = 0\}$ is the eventual-convergence event. This is a deterministic inclusion: if at time $T$ at least one encoder is near the target and the two encoders are mutually close, then both eventually converge. A3 is the only assumption used to transfer the finite-time bound (\Cref{thm:joint_proximity}) to eventual convergence (\Cref{cor:overlap_convergence}), and is not used in the finite-time theorem itself. The assumption is natural when the shared target is a stable fixed point of the training dynamics.

\paragraph{Separation of concerns.}
The two-step structure (first \Cref{thm:joint_proximity} under A1--A2, then \Cref{cor:overlap_convergence} under A1--A3) is intentional. The finite-time theorem is the primary result and gives a concrete bound that depends on the start-condition overlap. The corollary adds A3 to convert this into an asymptotic statement, ensuring that the finite-time bound stands on its own when the basin condition is difficult to verify.

\paragraph{Adapter dynamics in cross-modal alignment.}
In applications where the encoders are frozen and only an adapter is trained (\textit{e.g.}, cross-modal alignment), A1 and A2 apply to the adapter-induced dynamics on $q_W^{(t)}$: the distributions evolve through the adapter while the underlying encoders remain fixed. The same theorem then bounds the probability that the adapter brings the two distributions close to each other and to their respective targets within $T$ adapter-training steps.

\subsection{Finite-time control of the inter-encoder gap}

\begin{lemma}[Finite-time control]
\label{lem:finite_time_delta}
Under A2, $\mathbb{E}\bigl[\delta_i^{(T)} \mid \delta_i^{(0)}\bigr] \le \lambda(T)\,\delta_i^{(0)}$.
\end{lemma}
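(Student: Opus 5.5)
The lemma is the specialization of assumption A2 to the single time index $t = T$, so the plan is a direct instantiation rather than a genuine derivation. A2 states that for \emph{every} $t \ge 0$ the conditional expectation $\mathbb{E}\bigl[\delta_i^{(t)} \mid \delta_i^{(0)}\bigr]$ is at most $\lambda(t)\,\delta_i^{(0)}$. Setting $t = T$ gives the claim at once. Here $T$ is the finite horizon from A1, and it is a nonnegative integer, so it lies in the domain $\mathbb{N}_0$ of $\lambda$.

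The only points to check are well-posedness. First, $\delta_i^{(T)} = \TV\bigl(q_X^{(T)}(\cdot\mid i), q_Y^{(T)}(\cdot\mid i)\bigr)$ takes values in $[0,1]$ and is $\mathcal F_T$-measurable. Hence the conditional expectation given $\delta_i^{(0)}$ exists and is finite, and no integrability issue arises. Second, we should confirm that the right-hand side has the intended meaning. By \Cref{prop:smoothed_tv}, as recorded in \Cref{eq:delta0_overlap}, $\delta_i^{(0)} = (1-\varepsilon)\bigl(1-o_i^{(0)}\bigr)$. The bound can therefore be rewritten as $\mathbb{E}\bigl[\delta_i^{(T)} \mid \delta_i^{(0)}\bigr] \le \lambda(T)(1-\varepsilon)\bigl(1-o_i^{(0)}\bigr)$, which is the form used downstream.

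The intended next use is Markov's inequality, applied conditionally on $\delta_i^{(0)}$. Since $\delta_i^{(T)} \ge 0$, it gives
\[
\mathbb{P}\bigl(\delta_i^{(T)} > \eta \mid \delta_i^{(0)}\bigr) \le \frac{\lambda(T)(1-\varepsilon)\bigl(1-o_i^{(0)}\bigr)}{\eta}.
\]
Combining this with A1 through a union bound on the complement then yields the $[\,\cdot\,]_+$ expression in \Cref{thm:joint_proximity}.

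I expect no real obstacle here. The only subtlety is that A2 is stated conditionally on $\delta_i^{(0)}$, so every subsequent step (Markov, union bound) must also be carried out conditionally. This keeps the final bound expressed in terms of the per-sample start-condition overlap $o_i^{(0)}$ and not an average over starting conditions.
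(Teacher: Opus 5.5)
Your proposal is correct and matches the paper's proof, which is the one-line observation that the lemma is exactly Assumption A2 evaluated at $t = T$. Your extra remarks on the boundedness of $\delta_i^{(T)}$ and on carrying out the Markov step conditionally are consistent with how the paper uses the lemma in the proof of \Cref{thm:joint_proximity}.
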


\noindent\emph{Proof.}
This is exactly Assumption A2. \qed

\subsection{Two-form statement and proof of \Cref{thm:joint_proximity}}
\label{app:proof_overlap}

Under A1--A2, for every $\eta > 0$, the finite-time bound takes two equivalent forms:
\begin{equation}
\mathbb{P}\bigl(m_i^{(T)} \le \xi,\ \delta_i^{(T)} \le \eta \mid \delta_i^{(0)}\bigr)
\ge \left[p_T - \frac{\lambda(T)\,\delta_i^{(0)}}{\eta}\right]_+
\qquad\text{(discrepancy form)}
\end{equation}
and equivalently, by the affine bijection \Cref{eq:delta0_overlap} between $\delta_i^{(0)}$ and $o_i^{(0)}$,
\begin{equation}
\mathbb{P}\bigl(m_i^{(T)} \le \xi,\ \delta_i^{(T)} \le \eta \mid o_i^{(0)}\bigr)
\ge \left[p_T - \frac{\lambda(T)(1-\varepsilon)\bigl(1-o_i^{(0)}\bigr)}{\eta}\right]_+
\qquad\text{(overlap form).}
\end{equation}
The discrepancy form is the natural output of the proof, and the overlap form is the version most directly connected to the empirical quantity $o_i$.

\noindent\emph{Proof.}
Fix a sample $i$ and define $A := \{m_i^{(T)} \le \xi\}$ and $B := \{\delta_i^{(T)} \le \eta\}$.

\emph{Step 1: separate the two requirements.}
Since $A = (A \cap B) \cup (A \cap B^c)$ with the two parts disjoint, conditional additivity gives $\mathbb P(A \mid \delta_i^{(0)}) = \mathbb P(A \cap B \mid \delta_i^{(0)}) + \mathbb P(A \cap B^c \mid \delta_i^{(0)})$. Rearranging and using $A \cap B^c \subseteq B^c$,
\begin{equation}
\mathbb P(A \cap B \mid \delta_i^{(0)})
\ge \mathbb P(A \mid \delta_i^{(0)}) - \mathbb P(B^c \mid \delta_i^{(0)}).
\label{eq:jointproof_split}
\end{equation}

\emph{Step 2: lower-bound the reachability term.}
Assumption A1 gives $\mathbb P(A \mid \delta_i^{(0)}) \ge p_T$.

\emph{Step 3: upper-bound the discrepancy-failure term.}
Markov's inequality applied conditionally on $\delta_i^{(0)}$ gives $\mathbb P(\delta_i^{(T)} > \eta \mid \delta_i^{(0)}) \le \mathbb E[\delta_i^{(T)} \mid \delta_i^{(0)}]/\eta$. \Cref{lem:finite_time_delta} bounds the numerator by $\lambda(T)\,\delta_i^{(0)}$, so $\mathbb P(B^c \mid \delta_i^{(0)}) \le \lambda(T)\,\delta_i^{(0)} / \eta$.

\emph{Step 4: combine.}
Inserting the two bounds into ~\Cref{eq:jointproof_split} and taking the positive part (since the left-hand side is a probability) gives the discrepancy form. Substituting ~\Cref{eq:delta0_overlap} gives the overlap form. \qed

\medskip
\noindent\emph{Informativeness regime.} The bound is informative when $\lambda(T)(1-\varepsilon)\bigl(1-o_i^{(0)}\bigr) < p_T\,\eta$. Outside this regime the right-hand side is zero and the theorem yields no nontrivial guarantee. In particular, for fixed $\eta > 0$ the bound becomes informative for sufficiently large $T$ whenever $\lambda(T)\to 0$, and the threshold on $T$ shrinks with the start-condition overlap $o_i^{(0)}$.

\subsection{Convergence under basin condition}
\label{app:corollary}
\begin{corollary}[Convergence under basin condition]
\label{cor:overlap_convergence}
Under A1--A3,
\begin{equation}
\begin{aligned}
\mathbb{P}\bigl(\cC_i \mid \delta_i^{(0)}\bigr)
&\ge \left[p_T - \frac{\lambda(T)\,\delta_i^{(0)}}{\eta}\right]_+ \quad\text{and equivalently}\\
\mathbb{P}\bigl(\cC_i \mid o_i^{(0)}\bigr)
&\ge \left[p_T - \frac{\lambda(T)(1-\varepsilon)\bigl(1-o_i^{(0)}\bigr)}{\eta}\right]_+.
\end{aligned}
\end{equation}
\end{corollary}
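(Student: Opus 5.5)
The corollary follows from \Cref{thm:joint_proximity} by monotonicity of conditional probability under the inclusion supplied by Assumption A3. No new probabilistic estimate is needed; the work is in lining up the events and the conditioning. Fix a sample $i$ and write $A := \{m_i^{(T)} \le \xi\}$ and $B := \{\delta_i^{(T)} \le \eta\}$, as in the proof of \Cref{thm:joint_proximity}. Take the threshold $\eta$ to be the one whose existence A3 asserts.

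First, I invoke the discrepancy form of \Cref{thm:joint_proximity}, established in \Cref{app:proof_overlap} under A1--A2 for every $\eta>0$, and hence in particular for the A3 threshold. It gives
\[
\mathbb{P}\bigl(A \cap B \mid \delta_i^{(0)}\bigr) \ge \Bigl[p_T - \tfrac{\lambda(T)\,\delta_i^{(0)}}{\eta}\Bigr]_+ .
\]
Second, A3 states the deterministic inclusion $A \cap B \subseteq \cC_i$. Monotonicity of conditional probability then gives $\mathbb{P}(\cC_i \mid \delta_i^{(0)}) \ge \mathbb{P}(A \cap B \mid \delta_i^{(0)})$, and chaining the two inequalities yields the discrepancy form of the corollary.

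Third, I convert to the overlap form using \Cref{eq:delta0_overlap}, namely $\delta_i^{(0)} = (1-\varepsilon)(1-o_i^{(0)})$. This map is an affine bijection between $\delta_i^{(0)}$ and $o_i^{(0)}$ because $\varepsilon\in(0,1)$. The two random variables therefore generate the same $\sigma$-algebra, so conditioning on either gives the same conditional probability. Substituting $\delta_i^{(0)}$ into the right-hand side gives the second displayed bound.

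The only point needing care is the choice of $\eta$: the corollary holds for the $\eta$ furnished by A3, not for arbitrary $\eta$. I will also check that $\cC_i$ is measurable, i.e.\ that the limit event $\{\lim_{t\to\infty}\max_W \TV(q_W^{(t)},q_W^\star)=0\}$ lies in $\mathcal F_\infty$. This holds because it is a countable intersection and union of events determined by the trajectory.
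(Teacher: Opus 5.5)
Your proposal is correct and follows the paper's proof exactly: the A3 inclusion $\{m_i^{(T)}\le\xi,\ \delta_i^{(T)}\le\eta\}\subseteq\cC_i$ plus monotonicity of conditional probability, then the discrepancy form of \Cref{thm:joint_proximity}, then the affine substitution \Cref{eq:delta0_overlap} to get the overlap form. Your added remarks, that $\eta$ must be the threshold furnished by A3 and that $\cC_i$ is measurable, are sensible clarifications that the paper leaves implicit.
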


\noindent\emph{Proof.}
Assumption A3 gives $\{m_i^{(T)} \le \xi,\ \delta_i^{(T)} \le \eta\} \subseteq \cC_i$, so monotonicity of probability gives $\mathbb P(\cC_i \mid \delta_i^{(0)}) \ge \mathbb P(m_i^{(T)} \le \xi,\ \delta_i^{(T)} \le \eta \mid \delta_i^{(0)})$. Substituting the bound from \Cref{thm:joint_proximity} (discrepancy form) yields the first inequality, and~\Cref{eq:delta0_overlap} converts it into the overlap form. \qed

\subsection{Extension to noisy finite-time decay}
\label{app:noisy_contraction}

A2 requires exact finite-time decay:
$\mathbb{E}[\delta_i^{(t)} \mid \delta_i^{(0)}] \le \lambda(t)\,\delta_i^{(0)}$,
where $\lambda(t)$ is nonincreasing and tends to zero. A more realistic model of SGD allows an additive noise floor.

\noindent\textbf{(A2$'$) Noisy finite-time decay.} There exist functions
$\lambda:\mathbb{N}_0\to[0,\infty)$ and $b:\mathbb{N}_0\to[0,\infty)$, with $\lambda(t)\to 0$, such that for every $t \ge 0$,
\begin{equation}
\mathbb{E}\bigl[\delta_i^{(t)} \mid \delta_i^{(0)}\bigr]
\le \lambda(t)\,\delta_i^{(0)} + b(t).
\end{equation}

\medskip
When $b(t)=0$, A2$'$ reduces to A2. We refer to $b(t)$ as the finite-time noise floor.

\begin{lemma}[Finite-time control under noisy decay]
\label{lem:noisy_delta}
Under A2$'$,
\begin{equation}
\mathbb{E}\bigl[\delta_i^{(T)} \mid \delta_i^{(0)}\bigr]
\le \lambda(T)\,\delta_i^{(0)} + b(T).
\end{equation}
\end{lemma}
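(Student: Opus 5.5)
The statement of \Cref{lem:noisy_delta} is the specialization of Assumption A2$'$ to the time $t = T$, so the plan is to read it off directly, mirroring \Cref{lem:finite_time_delta}.

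\emph{Step 1: fix the time horizon.} A2$'$ is assumed to hold for every $t \ge 0$, with the same functions $\lambda$ and $b$. We take the finite horizon $T > 0$ from A1, which lies in $\mathbb{N}_0$.

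\emph{Step 2: instantiate the assumption.} Setting $t = T$ in A2$'$ gives
\begin{equation*}
\mathbb{E}\bigl[\delta_i^{(T)} \mid \delta_i^{(0)}\bigr] \le \lambda(T)\,\delta_i^{(0)} + b(T).
\end{equation*}
This is exactly the claimed inequality. No monotonicity of $\lambda$ is needed, and A2$'$ does not require it.

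\emph{Step 3: check well-definedness.} The conditional expectation is well defined because $\delta_i^{(T)} \in [0,1]$ is a total variation distance, hence bounded and integrable. This is the same reason the expectation in \Cref{lem:finite_time_delta} makes sense.

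There is no real obstacle here. The only care point is bookkeeping: the lemma is stated for the same filtration and the same conditioning on $\delta_i^{(0)}$ as A2$'$, so nothing changes between the two.

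The payoff comes downstream. The lemma will be combined with Steps 1--4 of the proof of \Cref{thm:joint_proximity}, where Markov's inequality now gives
\begin{equation*}
\mathbb{P}(B^c \mid \delta_i^{(0)}) \le \frac{\lambda(T)\,\delta_i^{(0)} + b(T)}{\eta}.
\end{equation*}
Hence the noisy analogue of the bound is
\begin{equation*}
\Bigl[p_T - \frac{\lambda(T)(1-\varepsilon)\bigl(1-o_i^{(0)}\bigr) + b(T)}{\eta}\Bigr]_+ .
\end{equation*}
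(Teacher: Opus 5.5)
Your proposal is correct and matches the paper's proof, which simply notes that the lemma is Assumption A2$'$ instantiated at $t = T$. Your remarks on the integrability of $\delta_i^{(T)} \in [0,1]$ and on the downstream Markov step are accurate; the latter is exactly how the paper proves \Cref{thm:noisy_proximity}. Neither is needed for the lemma itself.
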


\noindent\emph{Proof.}
This is exactly Assumption A2$'$. \qed

\begin{theorem}[Finite-time bound under noisy decay]
\label{thm:noisy_proximity}
Under A1 and A2$'$, for every $\eta > 0$,
\begin{equation}
\mathbb{P}\bigl(m_i^{(T)} \le \xi,\ \delta_i^{(T)} \le \eta \mid o_i^{(0)}\bigr)
\ge
\left[
p_T -
\frac{\lambda(T)(1-\varepsilon)\bigl(1-o_i^{(0)}\bigr) + b(T)}{\eta}
\right]_+.
\end{equation}
When $b(T)=0$, this reduces to \Cref{thm:joint_proximity}.
\end{theorem}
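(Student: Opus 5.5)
The proof reuses the four-step argument of \Cref{thm:joint_proximity}, replacing only the moment bound from A2 with the noisy version A2$'$. Fix a sample $i$ and set $A := \{m_i^{(T)} \le \xi\}$ and $B := \{\delta_i^{(T)} \le \eta\}$. Since $A \cap B^c \subseteq B^c$, conditional additivity gives, exactly as in \Cref{eq:jointproof_split},
\begin{equation*}
\mathbb P(A \cap B \mid \delta_i^{(0)}) \ge \mathbb P(A \mid \delta_i^{(0)}) - \mathbb P(B^c \mid \delta_i^{(0)}).
\end{equation*}
Assumption A1 lower-bounds the first term by $p_T$, and it is untouched by the change from A2 to A2$'$.

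For the failure term, apply conditional Markov's inequality to the nonnegative variable $\delta_i^{(T)}$:
\begin{equation*}
\mathbb P(\delta_i^{(T)} > \eta \mid \delta_i^{(0)}) \le \frac{\mathbb E[\delta_i^{(T)} \mid \delta_i^{(0)}]}{\eta}.
\end{equation*}
\Cref{lem:noisy_delta} bounds the numerator by $\lambda(T)\delta_i^{(0)} + b(T)$. Combining the two terms and taking the positive part, which is valid because the left-hand side is a probability, yields the discrepancy form
\begin{equation*}
\Bigl[p_T - \frac{\lambda(T)\delta_i^{(0)} + b(T)}{\eta}\Bigr]_+ .
\end{equation*}

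To obtain the overlap form, substitute $\delta_i^{(0)} = (1-\varepsilon)(1-o_i^{(0)})$ from \Cref{eq:delta0_overlap}, which follows from \Cref{prop:smoothed_tv}. The statement conditions on $o_i^{(0)}$ rather than $\delta_i^{(0)}$, so one must check that the two conditionings coincide. Since $\varepsilon \in (0,1)$ is fixed, the map $o \mapsto (1-\varepsilon)(1-o)$ is an affine bijection, so $\sigma(\delta_i^{(0)}) = \sigma(o_i^{(0)})$ and the conditional probabilities are the same.

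Setting $b(T) = 0$ recovers \Cref{thm:joint_proximity} verbatim, which gives the final claim. No step is a real obstacle: the only new ingredient is the additive $b(T)$, which passes linearly through Markov's inequality.
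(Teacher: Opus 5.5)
Your proposal is correct and matches the paper's proof. It reruns the split $\mathbb P(A\cap B)\ge \mathbb P(A)-\mathbb P(B^c)$, bounds the first term via A1, applies conditional Markov with \Cref{lem:noisy_delta} in place of \Cref{lem:finite_time_delta}, and substitutes $\delta_i^{(0)}=(1-\varepsilon)(1-o_i^{(0)})$; your explicit check that the affine bijection makes conditioning on $o_i^{(0)}$ and on $\delta_i^{(0)}$ equivalent is a small addition the paper only alludes to.
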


\noindent\emph{Proof.}
The proof follows that of \Cref{thm:joint_proximity}, substituting \Cref{lem:noisy_delta} for \Cref{lem:finite_time_delta} in the Markov step:
\begin{equation}
\mathbb{P}\bigl(\delta_i^{(T)} > \eta \mid \delta_i^{(0)}\bigr)
\le
\frac{\mathbb{E}[\delta_i^{(T)} \mid \delta_i^{(0)}]}{\eta}
\le
\frac{\lambda(T)\,\delta_i^{(0)} + b(T)}{\eta}.
\end{equation}
Using $\delta_i^{(0)}=(1-\varepsilon)\bigl(1-o_i^{(0)}\bigr)$ gives the stated overlap form. \qed

The noise floor $b(T)$ enters additively: even with perfect start-condition overlap ($o_i^{(0)} = 1$), the bound is $[p_T - b(T)/\eta]_+$ rather than $[p_T]_+$. For the bound to be nontrivial, $\eta$ must exceed $b(T) / p_T$, reflecting that noisy dynamics cannot guarantee arbitrarily precise convergence.

\section{Paradigm-level ceiling on overlap}
\label{app:ceiling}

To simplify notation, throughout this section we suppress the time index $t$ and write all neighborhood distributions, overlaps, and $k$-NN sets without an explicit time dependence.

\subsection{Proof of \Cref{thm:local_convergence}}
\label{app:proof_local_convergence}

Fix a sample $i$ and write $q_X^i := q_X(\cdot \mid i)$, $q_Y^i := q_Y(\cdot \mid i)$, $q_X^{\star i} := q_X^{\star}(\cdot \mid i)$, and $q_Y^{\star i} := q_Y^{\star}(\cdot \mid i)$.

\paragraph{Step 1: rewrite hard overlap in terms of smoothed total variation.}
\Cref{prop:smoothed_tv} gives
\begin{equation}
\TV(q_X^i,q_Y^i) = (1-\varepsilon)\bigl(1-o_i\bigr),
\qquad\text{equivalently}\qquad
1-o_i = \frac{1}{1-\varepsilon}\,\TV(q_X^i,q_Y^i).
\label{eq:proof_local_step1}
\end{equation}

\paragraph{Step 2: reverse triangle inequality through the two targets.}
Two applications of the reverse triangle inequality for total variation give
\begin{equation}
\TV(q_X^i,q_Y^i)
\ge \TV(q_X^{\star i},q_Y^{\star i}) - \TV(q_X^i,q_X^{\star i})  - \TV(q_Y^{\star i},q_Y^i).
\label{eq:proof_local_step2}
\end{equation}
The first term is $\gamma_i$ by definition~\Cref{eq:target_discrepancy}.

\paragraph{Step 3: bound the outer terms by KL.}
Pinsker's inequality, $\TV(p,q) \le \sqrt{\tfrac{1}{2}\KL(p\|q)}$, applied with $(p,q)=(q_X^{\star i},q_X^i)$ and $(q_Y^{\star i},q_Y^i)$ gives
\begin{equation}
\TV(q_X^{\star i},q_X^i) \le \sqrt{\tfrac{1}{2}\KL(q_X^{\star i}\|q_X^i)},
\qquad
\TV(q_Y^{\star i},q_Y^i) \le \sqrt{\tfrac{1}{2}\KL(q_Y^{\star i}\|q_Y^i)}.
\label{eq:proof_local_step3}
\end{equation}
Substituting \Cref{eq:proof_local_step3} into \Cref{eq:proof_local_step2} and then into \Cref{eq:proof_local_step1} yields the claimed bound.

\paragraph{Sequence statement.}
For a sequence of learned distributions indexed by $t$ such that both KL terms vanish, $\limsup_t\, o_{i}^{(t)} \le 1 - \gamma_i/(1-\varepsilon)$.When $q_X^{\star} = q_Y^{\star}$, $\gamma_i = 0$ and $o_{i}^{(t)} \to 1$. \qed

\subsection{A natural shared target}
\label{app:shared_target}

\Cref{thm:local_convergence} holds for any pair of full-support targets. One natural setting where $\gamma_i = 0$ is when both encoders observe paired views of the same underlying content (\textit{e.g.}, an image and its caption): a conditional affinity distribution
\begin{equation}
\label{eq:canonical_distribution}
\pstar(j \mid i) \propto \kappa(i, j), \qquad j \neq i,
\end{equation}
serves as a shared target, where $\kappa(i, j)$ measures co-occurrence or affinity between paired contents and is therefore independent of the view. For instance, when pairs arise under an unobserved context variable, $\kappa(i, j)$ is proportional to the expected co-occurrence of the two contents under that context, which is compatible with contrastive views of representation learning~\citep{oord2018representation}. Smoothing this candidate via
\begin{equation}
\label{eq:qstar_smooth}
q_\star^{(\varepsilon)}(j \mid i) := (1-\varepsilon)\,\pstar(j \mid i) + \frac{\varepsilon}{N-1}\,\mathbbm{1}\{j \neq i\}
\end{equation}
yields a full-support target suitable for \Cref{thm:local_convergence}. We do not require this specific choice: different objectives or modalities may induce different targets ($\gamma_i > 0$), and the bound remains informative whenever $\gamma_i$ is small.

\section{Additional empirical material}

\subsection{Initialization structure}
\label{app:init_structure}

The unimodal panels of \Cref{fig:init_structure} use the initial random $k$-NN overlap $o_i^{(0)}$ as a start-condition proxy for \Cref{thm:joint_proximity}. For the proxy to be well-posed, three things must hold at initialization: (i) two copies of the same randomly initialized model should agree on their local geometry (random seeds of the same architecture give a meaningful family-level $\delta_i^{(0)}$), (ii) models from different families should disagree more (architecture modulates $\delta_i^{(0)}$), and (iii) training must actually change local geometry (otherwise the final-vs-initial correlation would be a tautology). \Cref{fig:init_structure_vision,fig:init_structure_text} verify all three for vision and text respectively.

\begin{figure}[t]
\centering
\begin{subfigure}{0.48\linewidth}
    \centering
    \includegraphics[width=\linewidth]{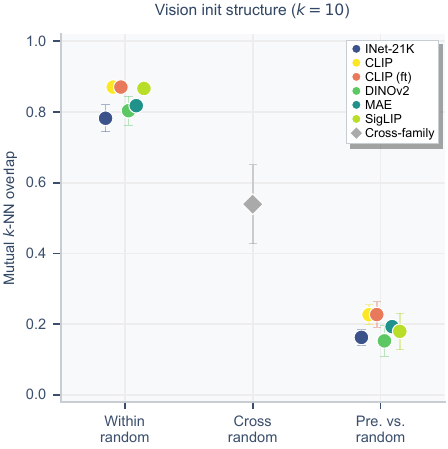}
    \caption{Vision initialization structure at $k=10$. Within-model random overlap is high and concentrated, cross-family random overlap is lower, and pretrained-vs-random overlap is near zero.}
    \label{fig:init_structure_vision}
\end{subfigure}
\hfill
\begin{subfigure}{0.48\linewidth}
    \centering
    \includegraphics[width=\linewidth]{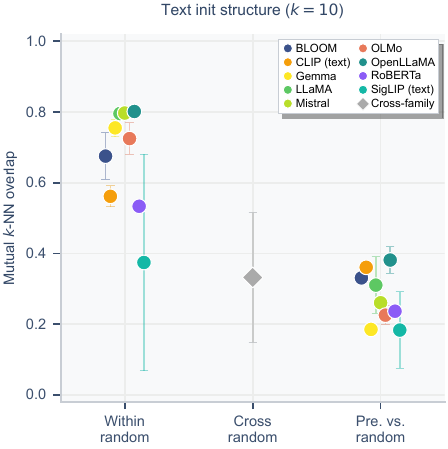}
    \caption{Text initialization structure at $k=10$. The same three-zone pattern as for vision: high within-family random overlap, intermediate cross-family, near-zero pretrained-vs-random.}
    \label{fig:init_structure_text}
\end{subfigure}
\caption{Initialization structure underlying the unimodal panels of \Cref{fig:init_structure}. Both modalities show structured random geometry within architecture families, lower cross-family random overlap, and a large drop between pretrained and random representations.}
\label{fig:init_structure_appendix}
\end{figure}

Within-model random overlap in the vision panel sits between $0.72$ and $0.89$ for every architecture family (mean $0.83$), indicating that two random seeds of the same architecture produce nearly identical local neighborhood structure. Cross-family random overlap has mean $\approx 0.54$, indicating that architecture choice does shift the initial geometry but less than across paradigms. Pretrained-vs-random overlap drops to $0.09$-$0.28$ (mean $0.19$), confirming that training substantially reshapes the local geometry rather than merely refining what the random network already encoded. Architecture alone therefore parameterizes $\delta_i^{(0)}$, and the training dynamics move every model far from its initialization, so \Cref{thm:joint_proximity} is tested non-trivially in this regime.

The text panel shows the same qualitative three-zone pattern across nine language-model families: within-family random overlap between $0.4$ and $0.8$, cross-family at $\approx 0.35$, and pretrained-vs-random collapsed to $0.2$-$0.4$. The within-family value is lower than for vision and spreads more, reflecting the wider design space of language-model blocks. The pretrained-vs-random drop is nonetheless large in every family, so the init-vs-final correlation in \Cref{fig:init_structure} (top right) reflects genuine learned structure rather than initialization bleed-through.

Taken together, the two panels show that families with similar architectures start with similar local geometry and that training moves every model far from its initialization. This is the condition under which the architectural proxy $\delta_i^{(0)}$ should predict the family-level spread of final overlap, and \Cref{fig:init_structure} (top) confirms that it does.

\subsection{Robustness to neighborhood size}
\label{app:robustness_k_metric}

The main text uses local $k$-NN overlap with $k=10$ throughout the empirical analyses. 
We repeat the combined proxies figure of \Cref{fig:init_structure} at $k \in \{20, 50, 100\}$ to check that the start-condition overlap continues to predict downstream transfer at larger neighborhood sizes. 
The qualitative conclusions remain unchanged: the start-condition overlap continues to predict the pretrained overlap in the unimodal validation and to predict downstream transfer in cross-modal alignment and task-arithmetic merging.

\begin{figure}[t]
\centering
\includegraphics[width=\linewidth]{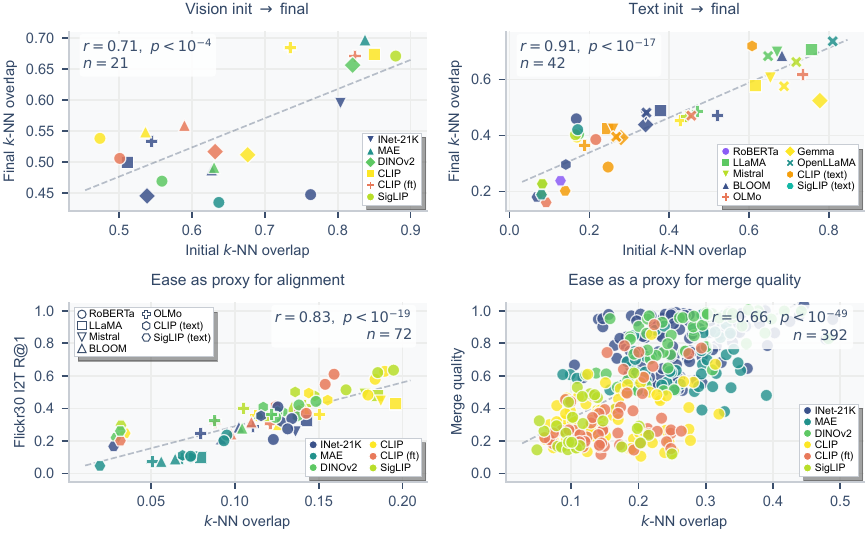}
\caption{Combined proxies figure at $k=20$. Top: unimodal init-to-final validation for vision and text. Bottom left: pretrained $k$-NN overlap predicts Flickr30k I2T R@1 after alignment. Bottom right: pre-merge $k$-NN overlap predicts merge quality. The same qualitative pattern as at $k=10$ is preserved.}
\label{fig:theorem3_k20}
\end{figure}

\begin{figure}[H]
\centering
\includegraphics[width=\linewidth]{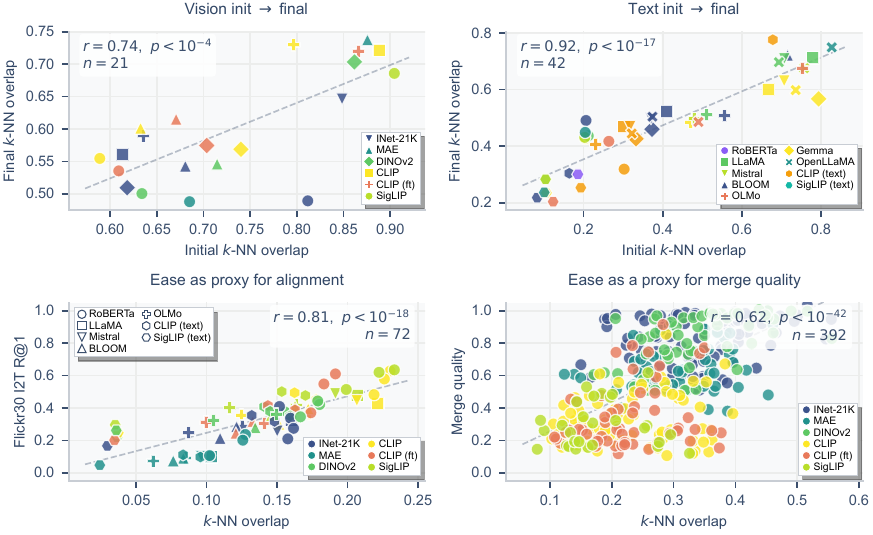}
\caption{Combined proxies figure at $k=50$. The start-condition overlap remains predictive at larger neighborhood size across all four settings.}
\label{fig:theorem3_k50}
\end{figure}

\begin{figure}[H]
\centering
\includegraphics[width=\linewidth]{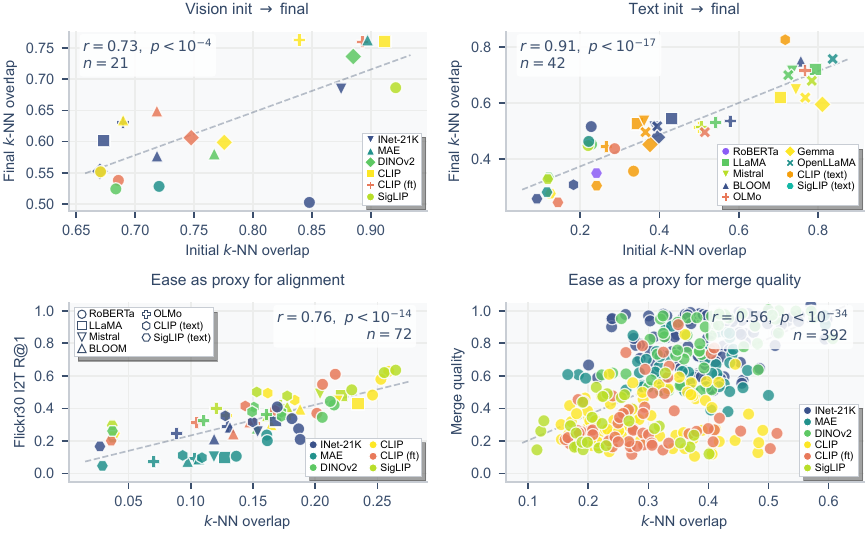}
\caption{Combined proxies figure at $k=100$. Even at a large neighborhood size, the qualitative structure is unchanged.}
\label{fig:theorem3_k100}
\end{figure}

\subsection{Probe-set sensitivity for merging ease}
\label{app:probe_sensitivity}

In the main text, the merging ease score is computed on a 2K-sample probe drawn from MNIST's train split (a task in the standard task-arithmetic benchmark, used as a fixed shared probe across all candidate pairs). 
To test the sensitivity of the diagnostic to this probe choice, we replace MNIST with an 8-task pool: 256 images per task across all eight fine-tuning tasks ($2{,}048$ samples total; Table~\ref{tab:probe_sensitivity}).

\begin{table}[h]
\centering
\small
\caption{Probe-set sensitivity for the merging ease score. Replacing the MNIST probe with an 8-task pool weakens the ease signal (the pool dilutes per-pair specialization across 8 distributions), but the product
predictor remains positively correlated with merge quality. The same direction holds: pairs with higher ease and higher ceiling merge better.}
\vspace{1em}
\begin{tabular}{lccc}
\toprule
Ease probe & $|r|$ ease & $|r|$ ceiling & $|r|$ product \\
\midrule
MNIST (2K samples, main text) & $0.67$ & $0.64$ & $0.70$ \\
8-task pool (256 / task, 2{,}048 samples) & $0.42$ & $0.64$ & $ 0.65$ \\
\bottomrule
\end{tabular}
\label{tab:probe_sensitivity}
\end{table}

\FloatBarrier

\vspace{-1em}
\subsection{Merging: paradigm-level results and multi-task extension}
\label{app:merge_extras}

This section supplements \Cref{sec:practical_merge} with a paradigm-level view of the within-model merge experiments and a full multi-task task-arithmetic extension ($n = 392$ pairs from 14 models, \textit{i.e.}, 9 ViT-B/16, 4 ViT-L/14-16, and 1 ViT-T/16, across 5 paradigms and 8 tasks).

\Cref{fig:merge_combined} shows two complementary views of the same merging experiments. Left: pairwise mean retention ($n=392$ pairs) grouped by paradigm, with merge quality encoded by color and median overlap indicated by horizontal lines. The monotonic increase in median overlap from contrastive VL to self-distillation confirms that the paradigm hierarchy is stable across task pairs. Right: full multi-task task arithmetic, in which all 8 task vectors are summed simultaneously ($\theta_{\mathrm{base}} + \alpha \sum_i \Delta_i$) at $\alpha \in \{0.3, 0.5\}$, and accuracy is averaged across all 8 tasks. The pairwise paradigm ordering is preserved at scale: discriminative and self-distillation paradigms retain the highest accuracy, while contrastive VL paradigms collapse.

This ordering is the inverse of the one observed for cross-modal alignment,
where contrastive vision-language models lead and discriminative or
self-distillation models trail. The inversion is consistent with the ceiling
bound: achievable overlap depends on the match between the pretraining
paradigm and the downstream target, rather than on the paradigm alone. Thus,
the pairwise compatibility ordering is not an artifact of two-task merges,
but remains visible in the harder multi-task setting as well.

\begin{figure}[H]
\centering
\includegraphics[width=\linewidth]{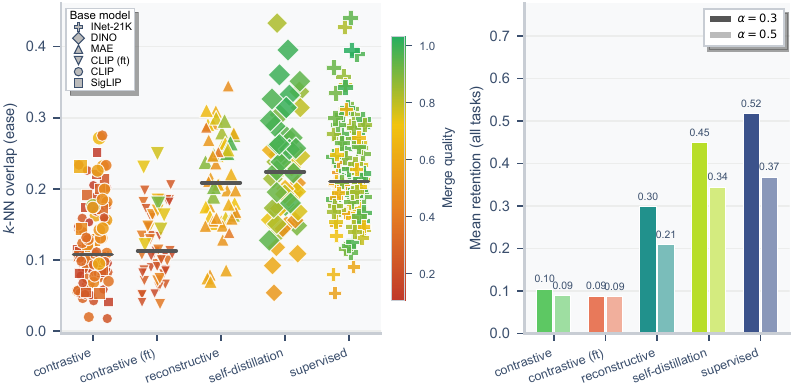}
\caption{Left: pair-specific $k$-NN overlap by paradigm, with median lines. The paradigm ordering is stable across task pairs ($n = 392$ pairwise merges). Color encodes pairwise merge quality (mean retention across the two original tasks). 
Marker shape encodes base model family. Right: full multi-task task arithmetic (all 8 tasks summed jointly), reporting mean accuracy across all 8 tasks. 
The pairwise ordering of \Cref{fig:merge_phase} is preserved at the multi-task scale: discriminative ($0.52/0.37$ multi-task accuracy) and self-distillation ($0.45/0.34$) retain the most, contrastive VL collapses ($0.10/0.09$). 
Note that these multi-task accuracies differ from the pairwise mean retentions reported in \Cref{sec:practical_merge}, since they aggregate across all 8 tasks at once rather than over pairs of tasks.
}
\label{fig:merge_combined}
\end{figure}

\subsection{Regime-level distribution of downstream outcomes}
\label{app:regime_boxplots}

This section gives the full distribution of downstream outcomes per regime, supplementing the inline regime statistics reported in \Cref{sec:practical_align,sec:practical_merge}. Regimes are assigned by splitting the ease and ceiling axes at their respective medians, as defined in the \name diagnostic of \Cref{sec:practical}. The Kruskal-Wallis statistic on each panel tests the null that the four regimes are drawn from the same distribution of the panel's outcome variable; the test is the rank-based, non-parametric analogue of a one-way ANOVA, and we use it because the per-regime sample sizes are unequal and the outcomes are not normally distributed.

\begin{figure}[H]
\centering
\includegraphics[width=\linewidth]{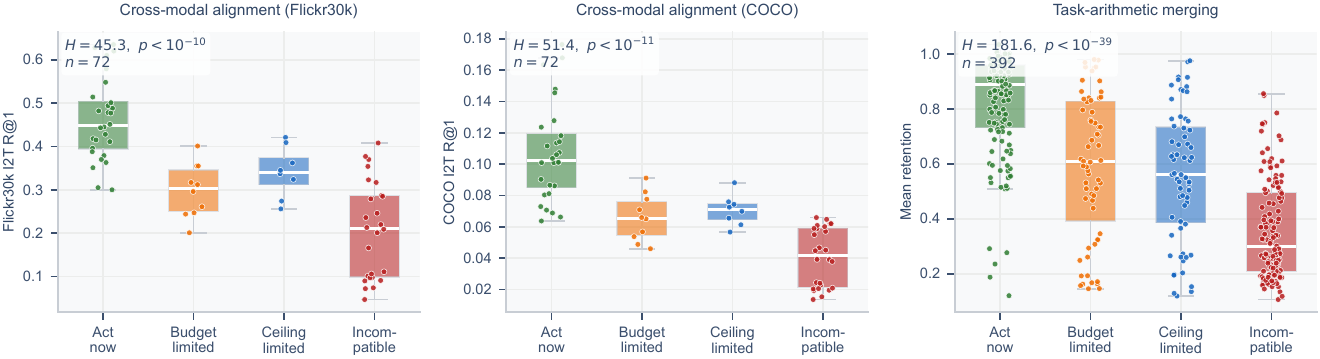}
\caption{Per-regime distribution of downstream outcomes for the three settings. \emph{Left:} Flickr30k I2T R@1 on $n = 72$ vision-language pairs. \emph{Center:} COCO I2T R@1 on the same pairs. \emph{Right:} mean retention on $n = 392$ task-arithmetic merges. Boxes show the interquartile range with the median line; whiskers extend to the non-outlier extremes; individual points are overlaid. Color encodes the four regimes: \emph{act now} (high ease, high ceiling), \emph{budget-limited} (low ease, high ceiling), \emph{ceiling-limited} (high ease, low ceiling), and \emph{incompatible} (low ease, low ceiling). The Kruskal-Wallis $H$ statistic and $p$-value at the top of each panel test the null that the four regimes share the same distribution; the test rejects in all three settings.}
\label{fig:regime_boxplots}
\end{figure}

\subsection{Ranking and budget model selection}
\label{app:ranking_selection}

Because \name is intended to rank candidate model pairs before transfer, we complement the Pearson analysis of \Cref{fig:predictors} with rank correlations and measures of the quality of the induced candidate ordering. All analyses use the same candidate pools and downstream outcomes as in the main experiments.

\subsubsection{Rank correlations}

Table~\ref{tab:rank_correlations} reports Spearman $\rho$ and Kendall $\tau$. COMPASS gives the strongest overall ranking in all settings. The results are nearly identical across the image-to-text (I2T) and text-to-image (T2I) retrieval directions.

\begin{table}[t]
\centering
\small
\caption{Rank correlations with downstream transfer performance. The final column uses global CKA for alignment and task-vector cosine for merging.}
\label{tab:rank_correlations}
\begin{tabular}{lccccc}
\toprule
Outcome &
COMPASS $\rho$ &
COMPASS $\tau$ &
Ease $\rho$ &
Ceiling $\rho$ &
Baseline $\rho$ \\
\midrule
Flickr30k I2T R@1 & 0.91 & 0.73 & 0.87 & 0.77 & 0.63 \\
Flickr30k T2I R@1 & 0.91 & 0.73 & 0.87 & 0.77 & 0.65 \\
COCO I2T R@1      & 0.93 & 0.77 & 0.88 & 0.81 & 0.67 \\
COCO T2I R@1      & 0.93 & 0.77 & 0.88 & 0.81 & 0.66 \\
Merge retention   & 0.70 & 0.50 & 0.66 & 0.61 & 0.06 \\
\bottomrule
\end{tabular}
\end{table}

\FloatBarrier

\subsubsection{Top-$k$ selection, regret, and NDCG}

For evaluation budget $k$, we define top-$k$ performance as the mean downstream outcome of the $k$ highest-ranked candidates. Regret is its relative shortfall from the oracle top-$k$ set. NDCG@$k$ evaluates the quality of the complete ordering within the first $k$ candidates, using downstream performance as relevance and normalizing by the oracle ordering. ``Budget to best'' is the rank assigned to the globally best candidate. Selection results for alignment and merging are reported in
Tables~\ref{tab:topk_flickr} and~\ref{tab:topk_merging}.

\begin{table}[!htbp]
\centering
\scriptsize
\caption{Selection performance on Flickr30k I2T retrieval ($n=72$ candidate pairs). Regret is relative to the oracle top-$k$ set.}
\label{tab:topk_flickr}
\resizebox{\linewidth}{!}{
\begin{tabular}{lcccccccccccc}
\toprule
Predictor &
Top-1 & Regret &
Top-5 & Regret & NDCG@5 &
Top-10 & Regret & NDCG@10 &
Top-20 & Regret & NDCG@20 &
Budget to best \\
\midrule
COMPASS
& 0.635 & 0.0\%
& 0.578 & 5.9\% & 0.957
& 0.530 & 5.7\% & 0.952
& 0.493 & 2.1\% & 0.972
& 1/72 \\

Ease
& 0.428 & 32.6\%
& 0.521 & 15.2\% & 0.819
& 0.530 & 5.7\% & 0.893
& 0.486 & 3.6\% & 0.923
& 2/72 \\

Ceiling
& 0.635 & 0.0\%
& 0.563 & 8.3\% & 0.945
& 0.455 & 19.0\% & 0.864
& 0.439 & 12.8\% & 0.889
& 1/72 \\

Global CKA
& 0.635 & 0.0\%
& 0.589 & 4.2\% & 0.970
& 0.515 & 8.2\% & 0.940
& 0.440 & 12.7\% & 0.901
& 1/72 \\

Paradigm distance
& 0.323 & 49.1\%
& 0.204 & 66.8\% & 0.368
& 0.226 & 59.8\% & 0.406
& 0.245 & 51.3\% & 0.458
& 72/72 \\

Total parameters
& 0.418 & 34.2\%
& 0.352 & 42.6\% & 0.614
& 0.328 & 41.6\% & 0.609
& 0.316 & 37.3\% & 0.629
& 21/72 \\

Random ordering
& 0.334 & 47.4\%
& 0.334 & 45.6\% & -
& 0.334 & 40.5\% & -
& 0.334 & 33.7\% & -
& 36.5 expected \\
\bottomrule
\end{tabular}}
\end{table}

Global CKA happens to rank the best Flickr30k pair first and is slightly stronger at $k=5$, but its ranking deteriorates as the budget increases: at $k=20$, \name has 2.1\% regret compared with 12.7\% for CKA, while also achieving higher NDCG. COCO shows the same pattern: \name/CKA regret is 5.9\%/4.5\% at $k=5$ and 2.1\%/14.4\% at $k=20$.

\begin{table}[!htbp]
\centering
\scriptsize
\caption{Selection performance for task-arithmetic merging ($n=392$ candidate merges).}
\label{tab:topk_merging}
\resizebox{\linewidth}{!}{
\begin{tabular}{lccccccccc}
\toprule
Predictor &
Top-1 & Regret &
Top-10 & Regret & NDCG@10 &
Top-40 & Regret & NDCG@40 &
Budget to best \\
\midrule
COMPASS
& 0.947 & 8.4\%
& 0.927 & 7.1\% & 0.920
& 0.890 & 9.1\% & 0.910
& 5/392 \\

Ease
& 0.947 & 8.4\%
& 0.935 & 6.4\% & 0.924
& 0.874 & 10.7\% & 0.896
& 5/392 \\

Ceiling
& 0.968 & 6.4\%
& 0.926 & 7.3\% & 0.930
& 0.828 & 15.5\% & 0.871
& 162/392 \\

Task-vector cosine
& 0.963 & 6.8\%
& 0.826 & 17.3\% & 0.841
& 0.708 & 27.7\% & 0.749
& 328/392 \\

Random ordering
& 0.585 & 43.4\%
& 0.585 & 41.4\% & -
& 0.585 & 40.3\% & -
& 196.5 expected \\
\bottomrule
\end{tabular}}
\end{table}

Ceiling and task-vector cosine can select a strong first merge but rank the full candidate pool substantially less well. \name finds the globally best merge after evaluating only 5 of the 392 candidates.

\FloatBarrier

\subsubsection{Budget evaluation}

We additionally evaluate the practical setting in which downstream transfer can be performed for only a limited number of candidates. At budget $b$, we take the best downstream outcome among the top $b$ candidates according to each predictor and normalize it by the optimum over the complete candidate pool. Random-order values are exact expectations (Tables~\ref{tab:budget_alignment} and~\ref{tab:budget_merging}).

\begin{table}[!htbp]
\centering
\small
\caption{Budget selection for Flickr30k I2T retrieval. Values are the best outcome found within the evaluation budget, normalized by the pool optimum.}
\label{tab:budget_alignment}
\begin{tabular}{lccccc}
\toprule
Pairs trained & 1 & 2 & 5 & 10 & 20 \\
\midrule
COMPASS     & 1.00 & 1.00 & 1.00 & 1.00 & 1.00 \\
Ease        & 0.67 & 1.00 & 1.00 & 1.00 & 1.00 \\
Global CKA  & 1.00 & 1.00 & 1.00 & 1.00 & 1.00 \\
Random order& 0.53 & 0.66 & 0.79 & 0.88 & 0.95 \\
\bottomrule
\end{tabular}
\end{table}

COCO gives the same qualitative result: \name reaches the pool optimum after one downstream run, compared with random-order expectations of 0.42 at $b=1$ and 0.68 at $b=5$.

\begin{table}[!htbp]
\centering
\small
\caption{Budget selection for task-arithmetic merging.}
\label{tab:budget_merging}
\begin{tabular}{lccccc}
\toprule
Merges performed & 1 & 5 & 10 & 20 & 50 \\
\midrule
COMPASS           & 0.92 & 1.00 & 1.00 & 1.00 & 1.00 \\
Ease              & 0.92 & 1.00 & 1.00 & 1.00 & 1.00 \\
Ceiling           & 0.94 & 0.94 & 0.95 & 0.96 & 0.97 \\
Task-vector cosine& 0.93 & 0.93 & 0.94 & 0.95 & 0.95 \\
Random order      & 0.57 & 0.87 & 0.93 & 0.95 & 0.97 \\
\bottomrule
\end{tabular}
\end{table}

\FloatBarrier

\subsection{Additional baselines}
\label{app:baselines}

We compare \name against three groups of alternatives: cheap metadata predictors, other representation-similarity measures, and learned regressors. Unless stated otherwise, values are Spearman correlations with the same downstream outcomes and over the same candidate sets used for \name.

\subsubsection{Metadata predictors}

Cheap metadata-based predictors are summarized in
Table~\ref{tab:metadata_baselines}.

\begin{table}[!htbp]
\centering
\small
\caption{Comparison with cheap metadata-based predictors.}
\label{tab:metadata_baselines}
\begin{tabular}{lcc}
\toprule
Predictor & Alignment: Flickr I2T & Merging: retention \\
\midrule
COMPASS & \textbf{0.91} & \textbf{0.70} \\
Pretraining paradigm distance & -0.60 & - \\
Architecture self-consistency & 0.37 & - \\
Total parameters & 0.09 & - \\
Random-init $k$-NN overlap & 0.11 & - \\
Mean solo accuracy of two checkpoints & - & 0.43 \\
Task-vector cosine & - & 0.06 \\
\bottomrule
\end{tabular}
\end{table}

The pretraining paradigm provides useful coarse information, but cannot distinguish candidate pairs within the same paradigm. In the within-family analysis of \Cref{app:within_family}, where the vision family is fixed, ease remains predictive with a median Spearman correlation of 0.80. Local $k$-NN overlap therefore captures pair-specific compatibility beyond metadata labels.

\FloatBarrier

\subsubsection{Alternative representation-similarity metrics}

To isolate the effect of the similarity measure itself, we compare metrics computed using identical model inputs, probe sets, and layer-search procedures (Table~\ref{tab:similarity_baselines}).

\begin{table}[!htbp]
\centering
\small
\caption{Alternative representation-similarity metrics computed on identical inputs.}
\label{tab:similarity_baselines}
\begin{tabular}{lcc}
\toprule
Metric & Alignment: Flickr I2T & Merging: retention \\
\midrule
Mutual $k$-NN overlap & \textbf{0.82} & \textbf{0.42} \\
RSA, rank correlation between RDMs & 0.38 & 0.22 \\
Procrustes similarity & 0.45 & 0.37 \\
Linear CKA & 0.34 & 0.25 \\
\bottomrule
\end{tabular}
\end{table}

Local 
$k$-NN overlap has highest predictive performance in both settings, particularly for alignment.

\FloatBarrier

\subsubsection{Learned regressor predictors}

 We additionally fit ridge and gradient-boosted regressors on one-hot family, paradigm, and task labels plus model sizes, and score their out-of-fold
predictions (Table~\ref{tab:learned_regressors}).

\begin{table}[!htbp]
\centering
\small
\caption{Comparison with learned metadata regressors. Unlike COMPASS, these models require previously observed downstream transfer outcomes as labels.}
\label{tab:learned_regressors}
\begin{tabular}{lcc}
\toprule
Evaluation & Alignment & Merging \\
\midrule
COMPASS, nothing fitted & 0.91 & 0.70 \\
Regressor, random 5-fold CV & 0.94 & 0.91 \\
Regressor, leave-one-family/backbone out & 0.60 & 0.81 \\
Regressor, leave-one-paradigm out & 0.46 & -0.54 \\
\bottomrule
\end{tabular}
\end{table}

The regressors can perform stronger under random cross-validation, but require completed transfer runs as supervision, which is the expensive step \name is meant to precede. For example: its merging figure of 0.81 comes from fitting on 364 completed merges in order to rank the remaining 28. It also degrades where a diagnostic is needed, on a model or paradigm it has not seen, and turns negative when a whole paradigm is held out. \name  fits no parameters and is computed before any transfer, so its number is the same in all three rows.

\FloatBarrier

\subsection{Generalization across held-out families, backbones, and paradigms}
\label{app:heldout}

We test whether the \name ranking depends on particular families, backbones, or paradigms. Because \name is parameter-free, we consider two complementary evaluations. In the \emph{rest} analysis, one group is removed, all scores that depend on the the reference library are recomputed without that group, and performance is measured on the remaining candidate pool. In the \emph{held-out only} analysis, the  correlation is evaluated only among candidates belonging to the removed group, isolating pair-specific variation within that group. 
Library-based scores are re-estimated per fold, excluding the held-out language family for alignment or paradigm for merging (Table~\ref{tab:heldout_summary}).

\begin{table}[!htbp]
\centering
\small
\caption{Generalization under held-out groups. The pooled references are $\rho=0.91$ for Flickr30k alignment and $\rho=0.70$ for merging.}
\label{tab:heldout_summary}
\begin{tabular}{lccc}
\toprule
Fold type & Folds & Spearman on rest & Spearman inside held-out group \\
\midrule
Leave-one-vision-family-out
& 6 & 0.85-0.93 & 0.81 median (0.56-0.87) \\
Leave-one-text-family-out
& 7 & 0.89-0.92 & 0.86 median (0.78-0.94) \\
Leave-one-vision-paradigm-out
& 5 & 0.85-0.93 & 0.80 median (0.56-0.87) \\
Leave-one-backbone-out (merging)
& 14 & 0.67-0.71 & 0.28 median (-0.13-0.75) \\
Leave-one-paradigm-out (merging)
& 5 & 0.60-0.72 & 0.27 median (0.20-0.66) \\
\bottomrule
\end{tabular}
\end{table}

Alignment is stable to removing individual model groups(Table~\ref{tab:heldout_alignment}): relative to the pooled correlation of 0.91, removing any vision or text family changes the correlation on the remaining pool by at most 0.06. Held-out-group correlations remain positive across all vision and text families.

\begin{table}[!htbp]
\centering
\small
\caption{Per-fold results for Flickr30k I2T alignment.}
\label{tab:heldout_alignment}
\begin{tabular}{lcc@{\hspace{1.2cm}}lcc}
\toprule
Held-out vision family & Rest & Held out &
Held-out text family & Rest & Held out \\
\midrule
CLIP             & 0.91 & 0.78 & CLIP text   & 0.92 & 0.78 \\
SigLIP           & 0.89 & 0.87 & SigLIP text & 0.89 & 0.86 \\
CLIP fine-tuned  & 0.92 & 0.83 & OLMo        & 0.92 & 0.86 \\
DINOv2           & 0.93 & 0.80 & BLOOM       & 0.91 & 0.90 \\
MAE              & 0.85 & 0.87 & RoBERTa     & 0.90 & 0.94 \\
Classification   & 0.92 & 0.56 & LLaMA       & 0.91 & 0.80 \\
                 &      &      & Mistral     & 0.91 & 0.80 \\
\bottomrule
\end{tabular}
\end{table}

In all seven text folds, ceilings re-estimated from the remaining language models preserve the full-library vision ranking (Spearman $\rho=1.00$).

For merging, the correlation on the remaining candidate pool stays between $0.67$ and $0.71$ when leaving out individual backbones, and between $0.60$ and $0.72$ when leaving out complete paradigms. Correlations measured only within one backbone are lower because $74\%$ of the variance in retention lies between backbones. Conditioning the probe on the particular task pair captures additional residual interference and increases the median within-backbone correlation from $0.28$ to $0.44$ across the 14 backbones, each containing 28 task pairs.
\FloatBarrier

\subsection{Robustness to the alignment procedure}
\label{app:adapter_robustness}

The complete 72-pair alignment experiment in the main text uses CSA. To test whether the inferred compatibility ranking is specific to this alignment method, we additionally evaluate 24 vision-language pairs spanning six vision encoders from all five vision paradigms and four text towers. For each pair, we train Linear, MLP, and LoRA-style adapters using a contrastive InfoNCE objective and compare their downstream retrieval rankings with CSA (Table~\ref{tab:adapter_agreement}).

\begin{table}[!htbp]
\centering
\small
\caption{Pair-ranking agreement across alignment procedures. Values below the diagonal are Spearman correlations for Flickr30k I2T and values above the diagonal are for T2I.}
\label{tab:adapter_agreement}
\begin{tabular}{lcccc}
\toprule
 & CSA & Linear & MLP & LoRA \\
\midrule
CSA    & -   & 0.94 & 0.82 & 0.96 \\
Linear & 0.87 & -   & 0.86 & 0.99 \\
MLP    & 0.76 & 0.84 & -   & 0.85 \\
LoRA   & 0.91 & 0.98 & 0.85 & - \\
\bottomrule
\end{tabular}
\end{table}

CSA rankings agree with LoRA at $\rho=0.91$ for I2T retrieval, while pairs of trained adapters agree with each other at $\rho=0.84$-$0.98$. We next repeat the predictor analysis using the outcome produced by each adapter (Table~\ref{tab:adapter_predictors}).

\begin{table}[!htbp]
\centering
\small
\caption{Spearman correlation with Flickr30k I2T R@1 under different alignment procedures.}
\label{tab:adapter_predictors}
\begin{tabular}{lcccc}
\toprule
Predictor & CSA & Linear & MLP & LoRA \\
\midrule
COMPASS    & \textbf{0.87} & \textbf{0.90} & \textbf{0.73} & \textbf{0.92} \\
Ease       & 0.78 & 0.78 & 0.56 & 0.78 \\
Ceiling    & 0.86 & 0.97 & 0.76 & 0.95 \\
Global CKA & 0.71 & 0.79 & 0.60 & 0.78 \\
\bottomrule
\end{tabular}
\end{table}

\name remains predictive of the downstream ranking produced by every trained adapter and exceeds global CKA in all cases. T2I retrieval gives the same qualitative conclusion. The consistency across CSA, Linear, MLP, and LoRA-style alignment indicates that the reported ranking reflects properties of the model pair rather than an artifact of CSA.

\FloatBarrier

\subsection{Cost and robustness of ceiling estimation}
\label{app:ceiling_robustness}

Ceiling estimation reuses the same cached probe representations required to compute pairwise overlaps and therefore introduces no additional model forward passes. For the $19\times10$ cross-modal candidate grid, each model is run over the probe once and the resulting overlap averages are then inexpensive to compute (Table~\ref{tab:compass_cost}).

\begin{table}[!htbp]
\centering
\small
\caption{Cost of \name scoring compared with exhaustive downstream alignment. 
The wall-clock comparison was benchmarked on a single RTX 2080 Ti.}
\label{tab:compass_cost}
\begin{tabular}{p{0.27\linewidth} p{0.25\linewidth} p{0.40\linewidth}}
\toprule
Procedure & Scaling & Cost for full $19\times10$ grid \\
\midrule
\name, including ceiling
& One probe pass per model
& 29 passes; $\sim$21 min; 12 GB cached features \\
Exhaustive downstream evaluation
& One alignment run per pair
& 190 runs; $\sim$86--102 h; 729 GB cached features \\
\bottomrule
\end{tabular}
\end{table}

The ceiling averages themselves take only seconds once the model features and pairwise overlaps have been cached. To test whether \name requires precise model-level ceiling estimates, we progressively coarsen or perturb the ceiling while keeping ease unchanged (Table~\ref{tab:ceiling_noise}).

\begin{table}[!htbp]
\centering
\small
\caption{Sensitivity to ceiling granularity and estimation noise. Values are Spearman correlations with downstream performance.}
\label{tab:ceiling_noise}
\begin{tabular}{lcc}
\toprule
Ceiling estimate & Alignment: Flickr30k I2T & Merging: retention \\
\midrule
Original estimate & 0.909 & 0.696 \\
One shared value per paradigm & 0.903 & 0.699 \\
Gaussian noise at 25\% of ceiling spread & 0.901 & 0.695 \\
Gaussian noise at 50\% of ceiling spread & 0.883 & 0.695 \\
Gaussian noise at 100\% of ceiling spread & 0.830 & 0.691 \\
Ceiling values shuffled across paradigms & 0.675 & 0.654 \\
Ceiling removed: ease alone & 0.866 & 0.657 \\
\bottomrule
\end{tabular}
\end{table}

These results show that \name is robust to coarse ceiling estimates. Replacing the individual values with one value per paradigm changes the correlation in a negligible way for alignment and for merging. It is also robust to moderate estimation error: with noise equal to 50\% of the ceiling's empirical spread, \name still reaches $0.883$ for alignment and $0.695$ for merging, compared with $0.866$ and $0.657$ when the ceiling is removed entirely.

The reason is that the ceiling mainly provides a coarse ordering between paradigms, while the ease distinguishes individual model pairs within each paradigm. Consequently, the exact numerical ceiling values are not critical as long as their relative ordering is approximately correct. When we deliberately destroy that ordering by shuffling the ceiling values across paradigms, the ceiling stops helping altogether: alignment falls to $0.675$, below the $0.866$ that ease alone reaches with no ceiling, and merging falls to $0.654$ against $0.657$ for ease alone. A wrongly ordered ceiling is therefore worse than no ceiling for alignment and worth nothing for merging. \name needs a directionally correct group ordering, not a precise model-by-model estimate.

\FloatBarrier

\subsection{Pair-specific signal within model families}
\label{app:within_family}

A possible alternative explanation for the strong predictive performance of \name is that it merely separates broad model paradigms with already-known downstream behavior. We therefore repeat the alignment analysis within individual vision-model families. For each row in Table~\ref{tab:within_family}, the vision family is held fixed and the correlation is computed only across candidate pairs within that subset. For example, the CLIP row contains only the 12 candidate pairs whose vision encoder belongs to the CLIP family; variation then comes from the text-model partner and, where available, the vision-model scale.

\begin{table}[!htbp]
\centering
\small
\caption{{Predictive power} within individual vision-model families on Flickr30k I2T retrieval.}
\label{tab:within_family}
\begin{tabular}{lcccc}
\toprule
Vision family held fixed &
Pairs &
Ease vs.\ R@1 &
\name vs.\ R@1 &
Within-family ceiling spread \\
\midrule
CLIP only            & 12 & 0.67 & 0.78 & 0.013 \\
SigLIP only          & 12 & 0.85 & 0.87 & 0.023 \\
CLIP fine-tuned only & 10 & 0.83 & 0.83 & 0.017 \\
MAE only             & 12 & 0.87 & 0.87 & 0.016 \\
DINOv2 only          & 13 & 0.77 & 0.80 & 0.024 \\
Classification only  & 13 & 0.43 & 0.56 & 0.009 \\
\bottomrule
\end{tabular}
\end{table}

Ease is positively correlated with downstream performance in all six families, with a median Spearman correlation of $0.80$. The encoder-specific ceiling varies little within each family, so performance differences within a row are driven primarily by pair-specific agreement rather than by broad differences between paradigms.

\FloatBarrier

\subsection{Sensitivity to training stage within a fixed architecture}
\label{app:training_stage}

We further test whether \name tracks changes induced by additional training rather than only static architectural or paradigm labels. Our library contains OpenCLIP LAION-2B checkpoints at three scales together with versions subsequently fine-tuned for ImageNet-12K classification. Table~\ref{tab:training_stage} compares their cross-modal ceilings and mean downstream retrieval before and after classification fine-tuning.

\begin{table}[!htbp]
\centering
\small
\caption{Change in cross-modal ceiling and mean Flickr30k I2T R@1 after ImageNet-12K classification fine-tuning.}
\label{tab:training_stage}
\begin{tabular}{lcccc}
\toprule
Encoder &
Ceiling before ft &
Ceiling after ft &
Mean R@1 before &
Mean R@1 after \\
\midrule
CLIP ViT-B/16 & 0.138 & 0.109 & 0.396 & 0.319 \\
CLIP ViT-L/14 & 0.147 & 0.117 & 0.579 & 0.548 \\
CLIP ViT-H/14 & 0.151 & 0.127 & 0.628 & 0.610 \\
\bottomrule
\end{tabular}
\end{table}

Classification fine-tuning lowers the cross-modal ceiling at all three scales, and downstream retrieval changes in the same direction. The diagnostic therefore tracks a training-stage change within a fixed architecture family, rather than merely recovering a difference between named pretraining paradigms.

\FloatBarrier

\subsection{SigLIP text-tower readout}
\label{app:siglip_text}

The unusually low overlap visible for several points in \Cref{fig:phase} is associated with the SigLIP \emph{text encoder}, rather than with the SigLIP vision-text pairing specifically.  Across all vision encoders paired with this text tower, calibrated cross-modal overlap lies between 0.017 and 0.031, compared with approximately 0.10-0.17 against the other language-model families.
The layer search also behaves differently for this tower. It selects the first SigLIP text layer, corresponding to the embedding-level representation, whereas the other text towers select deeper layers between 11 and 35. Before calibration, the selected SigLIP representation has overlap 0.041, compared with approximately 0.10-0.15 for the other text encoders.
Our language-side representation is the mean-pooled hidden state. Under this readout, the SigLIP text tower therefore exposes substantially less local neighborhood structure to the probe than the other text encoders. This observation should be interpreted as specific to the representation readout used in our experiments rather than as a general statement about SigLIP representations.
The downstream behavior is consistent with this measurement: the seven candidate pairs using the SigLIP text tower have mean Flickr30k R@1 of $0.205$, compared with approximately $0.28-0.42$ for the other text-model families. Importantly, ease still ranks the seven SigLIP-text pairs internally, with Spearman $\rho=0.75$. \name therefore flags these pairs as relatively weak under the chosen text-tower readout while retaining pair-specific predictive signal.

\subsection{What the ceiling predicts under increasing training budget}
\label{app:training_budget}

The ceiling is intended to capture the level of compatibility supported by the application, rather than the amount of improvement obtained from increasing the optimization budget by an arbitrary amount. To examine this interpretation directly, we sweep 10, 30, 100, and 300 alignment epochs on 18 vision-language pairs spanning OLMo-1B, OLMo-7B, and BLOOMz-3B text towers and six vision encoders. The OLMo-1B sweep is reported in Table~\ref{tab:budget_sweep}.

\begin{table}[!htbp]
\centering
\small
\caption{Flickr30k I2T R@1 as the alignment-training budget increases, with OLMo-1B held fixed.}
\label{tab:budget_sweep}
\begin{tabular}{lccccc}
\toprule
Vision encoder & Ceiling & 10 ep & 30 ep & 100 ep & 300 ep \\
\midrule
MAE ViT-B              & 0.055 & 0.142 & 0.183 & 0.216 & 0.215 \\
AugReg ViT-B           & 0.101 & 0.269 & 0.356 & 0.411 & 0.420 \\
DINOv2 ViT-B           & 0.106 & 0.362 & 0.480 & 0.477 & 0.471 \\
CLIP ViT-B ft-IN12k    & 0.109 & 0.288 & 0.370 & 0.454 & 0.459 \\
SigLIP ViT-B           & 0.130 & 0.359 & 0.472 & 0.544 & 0.559 \\
CLIP ViT-B LAION-2B    & 0.138 & 0.424 & 0.528 & 0.587 & 0.607 \\
\bottomrule
\end{tabular}
\end{table}

The ceiling-limited pair stops improving: MAE gains $0.074$ R@1 between 10 and 100 epochs and then nothing at all, $0.216$ to $0.215$, for a further threefold increase in budget. The high-ceiling pairs are still climbing at 300 epochs, CLIP by $0.020$ and SigLIP by $0.015$ over their 100-epoch values. Ranking the pairs by outcome therefore converges on ranking them by ceiling as the budget grows: Spearman between outcome and ceiling is $0.83$ at 10 and 30 epochs and $0.94$ at 100 and 300.

The same holds across text towers. Pooled over all 18 pairs, the correlation between outcome and ceiling rises from $0.79$ at 10 epochs to $0.92$ at 300. At 300 epochs, it is $0.94$, $0.94$, and $0.89$ within the OLMo-1B, OLMo-7B, and bloomz-3b towers, respectively.

The ceiling predicts the performance level a model pair converges to, and not necessarily how much performance increases when the budget does. The gain from 10 to 300 epochs also depends on how close each pair already is to its plateau at 10 epochs. For OLMo-1B, all pairs start sufficiently far from saturation, so pairs with higher ceiling also improve more, yielding a correlation of $0.89$. For bloomz-3b, several high-ceiling pairs are already close to their plateau after 10 epochs and therefore show little additional improvement, while the lower-ceiling MAE pair still has more room to improve. This produces a negative correlation between ceiling and gain. Our claim is that the ceiling predicts the final plateau, not the improvement observed between two specific training budgets.

\FloatBarrier

%% file: checklist.tex
\section*{NeurIPS Paper Checklist}

\begin{enumerate}

\item {\bf Claims}
    \item[] Question: Do the main claims made in the abstract and introduction accurately reflect the paper's contributions and scope?
    \item[] Answer: \answerYes{} 
    \item[] Justification: Yes, the main claims in the abstract and introduction are supported by the theoretical results in \Cref{sec:theory}, and by empirical evaluations in \Cref{sec:practical} across cross-modal alignment and task-arithmetic merging.
    \item[] Guidelines:
    \begin{itemize}
        \item The answer \answerNA{} means that the abstract and introduction do not include the claims made in the paper.
        \item The abstract and/or introduction should clearly state the claims made, including the contributions made in the paper and important assumptions and limitations. A \answerNo{} or \answerNA{} answer to this question will not be perceived well by the reviewers. 
        \item The claims made should match theoretical and experimental results, and reflect how much the results can be expected to generalize to other settings. 
        \item It is fine to include aspirational goals as motivation as long as it is clear that these goals are not attained by the paper. 
    \end{itemize}

\item {\bf Limitations}
    \item[] Question: Does the paper discuss the limitations of the work performed by the authors?
    \item[] Answer: \answerYes{} 
    \item[] Justification: Yes, the paper discusses the limitations in \Cref{app:limitations}.
    \item[] Guidelines:
    \begin{itemize}
        \item The answer \answerNA{} means that the paper has no limitation while the answer \answerNo{} means that the paper has limitations, but those are not discussed in the paper. 
        \item The authors are encouraged to create a separate ``Limitations'' section in their paper.
        \item The paper should point out any strong assumptions and how robust the results are to violations of these assumptions (e.g., independence assumptions, noiseless settings, model well-specification, asymptotic approximations only holding locally). The authors should reflect on how these assumptions might be violated in practice and what the implications would be.
        \item The authors should reflect on the scope of the claims made, e.g., if the approach was only tested on a few datasets or with a few runs. In general, empirical results often depend on implicit assumptions, which should be articulated.
        \item The authors should reflect on the factors that influence the performance of the approach. For example, a facial recognition algorithm may perform poorly when image resolution is low or images are taken in low lighting. Or a speech-to-text system might not be used reliably to provide closed captions for online lectures because it fails to handle technical jargon.
        \item The authors should discuss the computational efficiency of the proposed algorithms and how they scale with dataset size.
        \item If applicable, the authors should discuss possible limitations of their approach to address problems of privacy and fairness.
        \item While the authors might fear that complete honesty about limitations might be used by reviewers as grounds for rejection, a worse outcome might be that reviewers discover limitations that aren't acknowledged in the paper. The authors should use their best judgment and recognize that individual actions in favor of transparency play an important role in developing norms that preserve the integrity of the community. Reviewers will be specifically instructed to not penalize honesty concerning limitations.
    \end{itemize}

\item {\bf Theory assumptions and proofs}
    \item[] Question: For each theoretical result, does the paper provide the full set of assumptions and a complete (and correct) proof?
    \item[] Answer: \answerYes{} 
    \item[] Justification: Yes, for all the theoretical results in \Cref{sec:theory}, the paper provides a full set of assumptions and complete proofs in \Cref{app:distribution_identities},  \Cref{app:ease}, and \Cref{app:ceiling}.
    \item[] Guidelines:
    \begin{itemize}
        \item The answer \answerNA{} means that the paper does not include theoretical results. 
        \item All the theorems, formulas, and proofs in the paper should be numbered and cross-referenced.
        \item All assumptions should be clearly stated or referenced in the statement of any theorems.
        \item The proofs can either appear in the main paper or the supplemental material, but if they appear in the supplemental material, the authors are encouraged to provide a short proof sketch to provide intuition. 
        \item Inversely, any informal proof provided in the core of the paper should be complemented by formal proofs provided in appendix or supplemental material.
        \item Theorems and Lemmas that the proof relies upon should be properly referenced. 
    \end{itemize}

    \item {\bf Experimental result reproducibility}
    \item[] Question: Does the paper fully disclose all the information needed to reproduce the main experimental results of the paper to the extent that it affects the main claims and/or conclusions of the paper (regardless of whether the code and data are provided or not)?
    \item[] Answer: \answerYes{} 
    \item[] Justification: Yes, the experimental details, along with all the hyperparameters are described in the experimental setup in \Cref{app:setup}.
    \item[] Guidelines:
    \begin{itemize}
        \item The answer \answerNA{} means that the paper does not include experiments.
        \item If the paper includes experiments, a \answerNo{} answer to this question will not be perceived well by the reviewers: Making the paper reproducible is important, regardless of whether the code and data are provided or not.
        \item If the contribution is a dataset and\slash or model, the authors should describe the steps taken to make their results reproducible or verifiable. 
        \item Depending on the contribution, reproducibility can be accomplished in various ways. For example, if the contribution is a novel architecture, describing the architecture fully might suffice, or if the contribution is a specific model and empirical evaluation, it may be necessary to either make it possible for others to replicate the model with the same dataset, or provide access to the model. In general. releasing code and data is often one good way to accomplish this, but reproducibility can also be provided via detailed instructions for how to replicate the results, access to a hosted model (e.g., in the case of a large language model), releasing of a model checkpoint, or other means that are appropriate to the research performed.
        \item While NeurIPS does not require releasing code, the conference does require all submissions to provide some reasonable avenue for reproducibility, which may depend on the nature of the contribution. For example
        \begin{enumerate}
            \item If the contribution is primarily a new algorithm, the paper should make it clear how to reproduce that algorithm.
            \item If the contribution is primarily a new model architecture, the paper should describe the architecture clearly and fully.
            \item If the contribution is a new model (e.g., a large language model), then there should either be a way to access this model for reproducing the results or a way to reproduce the model (e.g., with an open-source dataset or instructions for how to construct the dataset).
            \item We recognize that reproducibility may be tricky in some cases, in which case authors are welcome to describe the particular way they provide for reproducibility. In the case of closed-source models, it may be that access to the model is limited in some way (e.g., to registered users), but it should be possible for other researchers to have some path to reproducing or verifying the results.
        \end{enumerate}
    \end{itemize}

\item {\bf Open access to data and code}
    \item[] Question: Does the paper provide open access to the data and code, with sufficient instructions to faithfully reproduce the main experimental results, as described in supplemental material?
    \item[] Answer: \answerYes{} 
    \item[] Justification: Yes, the code is available at \href{https://github.com/mlbio-epfl/COMPASS}{https://github.com/mlbio-epfl/COMPASS}
    \item[] Guidelines:
    \begin{itemize}
        \item The answer \answerNA{} means that paper does not include experiments requiring code.
        \item Please see the NeurIPS code and data submission guidelines (\url{https://neurips.cc/public/guides/CodeSubmissionPolicy}) for more details.
        \item While we encourage the release of code and data, we understand that this might not be possible, so \answerNo{} is an acceptable answer. Papers cannot be rejected simply for not including code, unless this is central to the contribution (e.g., for a new open-source benchmark).
        \item The instructions should contain the exact command and environment needed to run to reproduce the results. See the NeurIPS code and data submission guidelines (\url{https://neurips.cc/public/guides/CodeSubmissionPolicy}) for more details.
        \item The authors should provide instructions on data access and preparation, including how to access the raw data, preprocessed data, intermediate data, and generated data, etc.
        \item The authors should provide scripts to reproduce all experimental results for the new proposed method and baselines. If only a subset of experiments are reproducible, they should state which ones are omitted from the script and why.
        \item At submission time, to preserve anonymity, the authors should release anonymized versions (if applicable).
        \item Providing as much information as possible in supplemental material (appended to the paper) is recommended, but including URLs to data and code is permitted.
    \end{itemize}

\item {\bf Experimental setting/details}
    \item[] Question: Does the paper specify all the training and test details (e.g., data splits, hyperparameters, how they were chosen, type of optimizer) necessary to understand the results?
    \item[] Answer: \answerYes{} 
    \item[] Justification: Yes, the experimental details and all the hyperparameters are given in \Cref{app:setup}.
    \item[] Guidelines:
    \begin{itemize}
        \item The answer \answerNA{} means that the paper does not include experiments.
        \item The experimental setting should be presented in the core of the paper to a level of detail that is necessary to appreciate the results and make sense of them.
        \item The full details can be provided either with the code, in appendix, or as supplemental material.
    \end{itemize}

\item {\bf Experiment statistical significance}
    \item[] Question: Does the paper report error bars suitably and correctly defined or other appropriate information about the statistical significance of the experiments?
    \item[] Answer: \answerYes{} 
    \item[] Justification: Yes, the paper reports Pearson correlations with significance levels for the predictor analysis in \Cref{sec:practical}, and Kruskal--Wallis tests for the regime-level comparison in \Cref{sec:practical_merge} and \Cref{app:merge_extras}. 
    \item[] Guidelines:
    \begin{itemize}
        \item The answer \answerNA{} means that the paper does not include experiments.
        \item The authors should answer \answerYes{} if the results are accompanied by error bars, confidence intervals, or statistical significance tests, at least for the experiments that support the main claims of the paper.
        \item The factors of variability that the error bars are capturing should be clearly stated (for example, train/test split, initialization, random drawing of some parameter, or overall run with given experimental conditions).
        \item The method for calculating the error bars should be explained (closed form formula, call to a library function, bootstrap, etc.)
        \item The assumptions made should be given (e.g., Normally distributed errors).
        \item It should be clear whether the error bar is the standard deviation or the standard error of the mean.
        \item It is OK to report 1-sigma error bars, but one should state it. The authors should preferably report a 2-sigma error bar than state that they have a 96\% CI, if the hypothesis of Normality of errors is not verified.
        \item For asymmetric distributions, the authors should be careful not to show in tables or figures symmetric error bars that would yield results that are out of range (e.g., negative error rates).
        \item If error bars are reported in tables or plots, the authors should explain in the text how they were calculated and reference the corresponding figures or tables in the text.
    \end{itemize}

\item {\bf Experiments compute resources}
    \item[] Question: For each experiment, does the paper provide sufficient information on the computer resources (type of compute workers, memory, time of execution) needed to reproduce the experiments?
    \item[] Answer: \answerYes{} 
    \item[] Justification: Yes, \Cref{app:setup_compute} reports GPU type, memory, per-run wall time, per-run memory use, and the approximate total compute budget for the reported experiments.
    \item[] Guidelines:
    \begin{itemize}
        \item The answer \answerNA{} means that the paper does not include experiments.
        \item The paper should indicate the type of compute workers CPU or GPU, internal cluster, or cloud provider, including relevant memory and storage.
        \item The paper should provide the amount of compute required for each of the individual experimental runs as well as estimate the total compute. 
        \item The paper should disclose whether the full research project required more compute than the experiments reported in the paper (e.g., preliminary or failed experiments that didn't make it into the paper). 
    \end{itemize}
    
\item {\bf Code of ethics}
    \item[] Question: Does the research conducted in the paper conform, in every respect, with the NeurIPS Code of Ethics \url{https://neurips.cc/public/EthicsGuidelines}?
    \item[] Answer: \answerYes{} 
    \item[] Justification: Yes, the authors have carefully reviewed and adhered to the NeurIPS Code of Ethics.
    \item[] Guidelines:
    \begin{itemize}
        \item The answer \answerNA{} means that the authors have not reviewed the NeurIPS Code of Ethics.
        \item If the authors answer \answerNo, they should explain the special circumstances that require a deviation from the Code of Ethics.
        \item The authors should make sure to preserve anonymity (e.g., if there is a special consideration due to laws or regulations in their jurisdiction).
    \end{itemize}

\item {\bf Broader impacts}
    \item[] Question: Does the paper discuss both potential positive societal impacts and negative societal impacts of the work performed?
    \item[] Answer: \answerYes{} 
    \item[] Justification: Yes, the positive societal impact of the paper is discussed in \Cref{sec:intro}, \Cref{sec:practical}, and \Cref{sec:discussion} and broader impact is discussed in \Cref{app:broader_impact}. Negative societal impacts are not expected as the outlined diagnostics tends to reduce wasted compute in model transfer experiments.
    \item[] Guidelines:
    \begin{itemize}
        \item The answer \answerNA{} means that there is no societal impact of the work performed.
        \item If the authors answer \answerNA{} or \answerNo, they should explain why their work has no societal impact or why the paper does not address societal impact.
        \item Examples of negative societal impacts include potential malicious or unintended uses (e.g., disinformation, generating fake profiles, surveillance), fairness considerations (e.g., deployment of technologies that could make decisions that unfairly impact specific groups), privacy considerations, and security considerations.
        \item The conference expects that many papers will be foundational research and not tied to particular applications, let alone deployments. However, if there is a direct path to any negative applications, the authors should point it out. For example, it is legitimate to point out that an improvement in the quality of generative models could be used to generate Deepfakes for disinformation. On the other hand, it is not needed to point out that a generic algorithm for optimizing neural networks could enable people to train models that generate Deepfakes faster.
        \item The authors should consider possible harms that could arise when the technology is being used as intended and functioning correctly, harms that could arise when the technology is being used as intended but gives incorrect results, and harms following from (intentional or unintentional) misuse of the technology.
        \item If there are negative societal impacts, the authors could also discuss possible mitigation strategies (e.g., gated release of models, providing defenses in addition to attacks, mechanisms for monitoring misuse, mechanisms to monitor how a system learns from feedback over time, improving the efficiency and accessibility of ML).
    \end{itemize}
    
\item {\bf Safeguards}
    \item[] Question: Does the paper describe safeguards that have been put in place for responsible release of data or models that have a high risk for misuse (e.g., pre-trained language models, image generators, or scraped datasets)?
    \item[] Answer: \answerNA{} 
    \item[] Justification: This paper poses no such risks as beyond code we are not releasing any new artifacts such as data or models.
    \item[] Guidelines:
    \begin{itemize}
        \item The answer \answerNA{} means that the paper poses no such risks.
        \item Released models that have a high risk for misuse or dual-use should be released with necessary safeguards to allow for controlled use of the model, for example by requiring that users adhere to usage guidelines or restrictions to access the model or implementing safety filters. 
        \item Datasets that have been scraped from the Internet could pose safety risks. The authors should describe how they avoided releasing unsafe images.
        \item We recognize that providing effective safeguards is challenging, and many papers do not require this, but we encourage authors to take this into account and make a best faith effort.
    \end{itemize}

\item {\bf Licenses for existing assets}
    \item[] Question: Are the creators or original owners of assets (e.g., code, data, models), used in the paper, properly credited and are the license and terms of use explicitly mentioned and properly respected?
    \item[] Answer: \answerYes{} 
    \item[] Justification: Yes, the paper cites the pretrained models and datasets used in the experiments, and \Cref{app:setup_licenses} lists the datasets together with their licenses or terms of use.
    \item[] Guidelines:
    \begin{itemize}
        \item The answer \answerNA{} means that the paper does not use existing assets.
        \item The authors should cite the original paper that produced the code package or dataset.
        \item The authors should state which version of the asset is used and, if possible, include a URL.
        \item The name of the license (e.g., CC-BY 4.0) should be included for each asset.
        \item For scraped data from a particular source (e.g., website), the copyright and terms of service of that source should be provided.
        \item If assets are released, the license, copyright information, and terms of use in the package should be provided. For popular datasets, \url{paperswithcode.com/datasets} has curated licenses for some datasets. Their licensing guide can help determine the license of a dataset.
        \item For existing datasets that are re-packaged, both the original license and the license of the derived asset (if it has changed) should be provided.
        \item If this information is not available online, the authors are encouraged to reach out to the asset's creators.
    \end{itemize}

\item {\bf New assets}
    \item[] Question: Are new assets introduced in the paper well documented and is the documentation provided alongside the assets?
    \item[] Answer: \answerNA{} 
    \item[] Justification: This paper does not release any new assets beyond the code, which is adequately documented in order to reproduce all the results from the paper.
    \item[] Guidelines:
    \begin{itemize}
        \item The answer \answerNA{} means that the paper does not release new assets.
        \item Researchers should communicate the details of the dataset\slash code\slash model as part of their submissions via structured templates. This includes details about training, license, limitations, etc. 
        \item The paper should discuss whether and how consent was obtained from people whose asset is used.
        \item At submission time, remember to anonymize your assets (if applicable). You can either create an anonymized URL or include an anonymized zip file.
    \end{itemize}

\item {\bf Crowdsourcing and research with human subjects}
    \item[] Question: For crowdsourcing experiments and research with human subjects, does the paper include the full text of instructions given to participants and screenshots, if applicable, as well as details about compensation (if any)? 
    \item[] Answer: \answerNA{} 
    \item[] Justification: The paper does not involve crowdsourcing nor research with human subjects.
    \item[] Guidelines:
    \begin{itemize}
        \item The answer \answerNA{} means that the paper does not involve crowdsourcing nor research with human subjects.
        \item Including this information in the supplemental material is fine, but if the main contribution of the paper involves human subjects, then as much detail as possible should be included in the main paper. 
        \item According to the NeurIPS Code of Ethics, workers involved in data collection, curation, or other labor should be paid at least the minimum wage in the country of the data collector. 
    \end{itemize}

\item {\bf Institutional review board (IRB) approvals or equivalent for research with human subjects}
    \item[] Question: Does the paper describe potential risks incurred by study participants, whether such risks were disclosed to the subjects, and whether Institutional Review Board (IRB) approvals (or an equivalent approval/review based on the requirements of your country or institution) were obtained?
    \item[] Answer: \answerNA{} 
    \item[] Justification: The paper does not involve human-subject research, so IRB approval or equivalent review is not applicable.
    \item[] Guidelines:
    \begin{itemize}
        \item The answer \answerNA{} means that the paper does not involve crowdsourcing nor research with human subjects.
        \item Depending on the country in which research is conducted, IRB approval (or equivalent) may be required for any human subjects research. If you obtained IRB approval, you should clearly state this in the paper. 
        \item We recognize that the procedures for this may vary significantly between institutions and locations, and we expect authors to adhere to the NeurIPS Code of Ethics and the guidelines for their institution. 
        \item For initial submissions, do not include any information that would break anonymity (if applicable), such as the institution conducting the review.
    \end{itemize}

\item {\bf Declaration of LLM usage}
    \item[] Question: Does the paper describe the usage of LLMs if it is an important, original, or non-standard component of the core methods in this research? Note that if the LLM is used only for writing, editing, or formatting purposes and does \emph{not} impact the core methodology, scientific rigor, or originality of the research, declaration is not required.
    \item[] Answer: \answerNA{} 
    \item[] Justification: The paper does not involve any development or LLMs. The use of LLM was limited to writing, editing, formatting or visualization assistance and did not affect the scientific methodology, experiments, or claims.
    \item[] Guidelines:
    \begin{itemize}
        \item The answer \answerNA{} means that the core method development in this research does not involve LLMs as any important, original, or non-standard components.
        \item Please refer to our LLM policy in the NeurIPS handbook for what should or should not be described.
    \end{itemize}

\end{enumerate}